\newcommand{\vn}{\mathbf{n}}
\newcommand{\vd}{\mathbf{d}}
\newcommand{\vv}{\mathbf{v}}
\newcommand{\mN}{\mathbf{N}}
\newcommand{\mD}{\mathbf{D}}
\newcommand{\mX}{\mathbf{X}}
\begin{document}

\title{TetraSDF: Analytic Isosurface Extraction with Multi-resolution Tetrahedral Grid
}
\titlerunning{TetraSDF}


\author{
Seonghun Oh\inst{1}\textsuperscript{*}\orcidlink{0009-0003-1578-9446}
\and
Youngjung Uh\inst{1}\orcidlink{0000-0001-8173-3334}
\and
Jin-Hwa Kim\inst{2,3}\textsuperscript{\ensuremath{\dagger}}\orcidlink{0000-0002-0423-0415}
}

\authorrunning{S. Oh et al.}

\institute{
Yonsei University, Republic of Korea
\and
NAVER AI Lab, Republic of Korea
\and
Seoul National University, Republic of Korea
\\
\email{\{rendell,yj.uh\}@yonsei.ac.kr, j1nhwa.kim@navercorp.com}
}

\maketitle

\begingroup
\renewcommand{\thefootnote}{}
\footnotetext{
\textsuperscript{*}Work done during an internship at NAVER AI Lab.
\quad
\textsuperscript{\ensuremath{\dagger}}Corresponding author.
\quad
Project page: \url{https://seonghunn.github.io/tetrasdf/}
}
\endgroup

\begin{abstract}
Extracting an explicit surface that exactly matches the zero-level set of a neural signed distance function (SDF) remains challenging.
Sampling-based isosurfacing methods such as Marching Cubes introduce discretization error.
In contrast, continuous piecewise affine (CPWA) analytic approaches typically require plain ReLU MLPs, which limits the ability to learn high-frequency SDFs in practice.
We present TetraSDF, an analytic isosurface extraction framework for SDFs that retains the expressiveness of grid-based encoders while enabling exact zero-level set extraction, by representing the SDF with a ReLU MLP composed with a multi-resolution tetrahedral positional encoder.
Our positional encoder's barycentric interpolation preserves a global CPWA structure, allowing us to track ReLU linear regions within an encoder-induced polyhedral complex.
We further introduce a fixed analytic input preconditioner derived from the encoder’s metric to reduce directional bias, thereby stabilizing training.
Across multiple benchmarks, TetraSDF matches or surpasses existing grid-based encoders in SDF reconstruction accuracy, while faithfully recovering the network’s zero-level set as a triangle mesh.
\end{abstract}

\section{Introduction}
\label{sec:intro}
Extracting explicit isosurfaces (\eg, triangle meshes) from learned SDFs is a standard step in neural implicit surface pipelines.
Learned SDFs are widely used for distance- and collision-based queries, while many downstream tools and pipelines consume explicit surface representations, making isosurface extraction a common interface step.
Ensuring that the extracted surface matches the SDF’s zero-level set helps preserve geometric consistency.
As a result, downstream objectives defined on the surface, such as normal alignment to field-derived normals or collision checking against the extracted surface, remain faithful to the network’s isosurface representation.

Classical methods such as Marching Cubes~\cite{Lorensen1987MarchingCubes}, Marching Tetrahedra~\cite{Treece1999RMT}, and Dual Contouring~\cite{Ju2002DualContouring} discretize the implicit field defining the isosurface, inevitably losing geometric fidelity due to their resolution-dependent sampling.
The resulting mismatch between the extracted mesh and the network’s isosurface cannot be eliminated at finite sampling resolution, and reducing the mismatch inflates triangle budgets and may introduce staircase artifacts.
Analytic Marching~\cite{Lei2020AnalyticMarching} later addressed this issue by interpreting ReLU-based MLPs as continuous piecewise affine (CPWA) functions~\cite{Montufar2014LinearRegions,Raghu2017ExpressivePower,Serra2018Bounds,Hanin2019Complexity}, allowing exact isosurface extraction from the network’s zero-level set.
However, its complexity scales poorly with network expressiveness, making it impractical.
Subsequent work refines this with an edge subdivision~\cite{berzins2023polyhedral} that leverages the sign vectors to identify the parent cells of boundary vertices without iterating all cells, improving both precision and efficiency.

Although these analytic methods yield explicit isosurfaces that exactly match the networks’ zero-level sets, they are limited to ReLU MLPs where CPWA analysis applies. Such networks are prone to spectral bias~\cite{rahaman2019spectral,Tancik2020FourierFeatures}, making it difficult to learn complex SDFs.
TropicalNeRF~\cite{Kim2024} attempts to retain the expressiveness of grid-based positional encoders while enabling analytic extraction via a piecewise trilinear interpretation.
However, for grid-based encoders using trilinear interpolation~\cite{mueller2022instant,chen2022tensorf}, its analytic guarantee relies on enforcing an eikonal constraint within each trilinear cell, which is difficult to satisfy uniformly, and thus uses a diagonal-plane intersection heuristic; consequently, the extracted isosurface is not guaranteed to exactly match the network’s zero-level set in general.


To overcome the precision loss caused by non-affine interpolation while still avoiding spectral bias, we introduce a network architecture that combines a multi-resolution tetrahedral positional encoder with a ReLU MLP, exploiting the affine nature of barycentric interpolation. 
This design preserves the benefits of grid-based positional encoders for learning high-frequency SDFs, while ensuring that the overall mapping remains CPWA and thus amenable to analytic isosurface extraction.
Notice that, unlike axis-aligned cuboids, the tetrahedral encoder partitions space into \emph{polyhedral cells}, which are then further subdivided by the folded hyperplanes of the ReLU MLP.

Building on this observation, we propose a novel analytic isosurface extraction framework for networks employing our multi-resolution tetrahedral positional encoder with a ReLU MLP.
While the positional encoder enhances the shape expressiveness, the extracted isosurface still exactly matches the network’s zero-level set due to the CPWA nature of the overall mapping.
The encoder is designed to support efficient cell indexing and neighbor identification, which are essential for analytic traversal of the polyhedral structure. 

Furthermore, we extend the edge subdivision algorithm to operate jointly on encoder-induced polyhedral cells and ReLU MLP linear regions, enabling exact CPWA analysis under positional encoding.
We also introduce an analytic input preconditioner for the multi-resolution tetrahedral encoder that mitigates directional bias induced by the encoder’s grid geometry, improving optimization conditioning during training.
In summary, our contributions are as follows:
    


\begin{itemize}
\item We propose TetraSDF, a framework that bridges expressive SDF learning and exact analytic isosurface extraction by preserving a global CPWA structure via barycentric interpolation in a tetrahedral positional encoder.
\item We propose a tensorized scheme for constructing and handling the resulting polyhedral structures, enabling an efficient GPU implementation.
\item We characterize directional bias induced by the encoder’s geometry via its metric and derive an input preconditioner to mitigate it, improving SDF learning accuracy.
\end{itemize}
\section{Related Work}
\label{sec:related}
\subsection{Isosurface Extraction from Neural Implicit Fields}
In neural implicit representations, shapes are modeled as continuous scalar fields parameterized by neural networks~\cite{Park2019DeepSDF,Mescheder2019Occupancy}.
Classical isosurface extraction methods such as Marching Cubes and its variants~\cite{Lorensen1987MarchingCubes, Treece1999RMT, Ju2002DualContouring} convert these fields into meshes via sampling.
However, the resulting surfaces suffer from discretization error, and reducing this error requires dense sampling, which rapidly increases mesh complexity.
Recent finite-sample surface reconstruction and adaptive isosurfacing methods further improve explicit surface recovery from sampled or queried fields~\cite{sellan2023reach,kohlbrenner2025power,ren2024mcgrids}.
While effective, these methods still recover surfaces from finite or adaptive field evaluations, rather than directly from the continuous neural field represented by the trained network itself.
To improve fidelity or enable learning-based extraction, several works integrate neural networks with isosurface extraction or differentiable mesh parameterizations~\cite{chen2021nmc,chen2022ndc,shen2021dmtet,gao2020deftet,shen2023flexicubes}.
These approaches can produce higher-quality meshes and support end-to-end training, but still rely on discrete sampling or surrogate volumetric parameterizations as the basis for mesh extraction.
Analytic meshing methods instead exploit the CPWA structure of ReLU MLPs~\cite{Lei2020AnalyticMarching,berzins2023polyhedral,stippel2025marching} or piecewise trilinear formulations~\cite{Kim2024} to more directly characterize the network’s zero-level sets, avoiding reliance on sampling.
However, in practice, the former are restricted to plain MLPs with limited ability to represent complex SDFs, while the latter relies on geometric heuristics that can degrade precision.

\subsection{Positional Encoding Methods}
Learning high-frequency geometric details in neural implicit functions is often hindered by spectral bias~\cite{rahaman2019spectral,Tancik2020FourierFeatures}. To mitigate this effect, positional encoding schemes have been proposed to map input coordinates into higher-dimensional feature spaces with rich frequency content~\cite{Tancik2020FourierFeatures,sitzmann2020implicit} to overcome spectral bias.
Beyond frequency-based methods, grid-based positional encoders have become dominant for large-scale 3D scenes, where learnable features are stored in spatial grids and interpolated at query points \cite{liu2020neural,chen2022tensorf,mueller2022instant,fridovich2023k}. 
Among grid-based approaches, Tetra-NeRF~\cite{kulhanek2023tetra} employs an adaptive tetrahedral representation and PermutoSDF~\cite{rosu2023permutosdf} leverages a permutohedral lattice; both use barycentric-style interpolation of features within simplex cells.
However, the data-dependent tetrahedralization in Tetra-NeRF and the higher-dimensional embedding in PermutoSDF make explicit cell and neighbor indexing less straightforward for analytic extraction.

\subsection{Preconditioning}
Preconditioning is a classical technique from numerical linear algebra~\cite{Greenbaum1997,Saad2003} that improves the convergence of iterative solvers by transforming ill-conditioned systems into better-conditioned forms.
This concept has been widely adopted in deep learning to stabilize training and accelerate convergence through gradient or parameter preconditioning via adaptive per-parameter scaling~\cite{Kingma2015Adam,Tieleman2012RMSProp,Gupta2018}.
ZCA whitening~\cite{Zeiler2014} similarly normalizes the covariance of input features to promote more isotropic learning behavior in the data space.
Recent works further explore preconditioning in structured domains such as implicit neural fields and camera parameter spaces~\cite{chng2025preconditioners,park2023camp}.
In the same spirit, we derive a fixed analytic input preconditioner that whitens the encoder-induced metric, thereby removing the directional bias introduced by the encoder’s geometric structure.

\begin{figure*}[ht]
  \centering
  \includegraphics[width=0.6\textwidth]{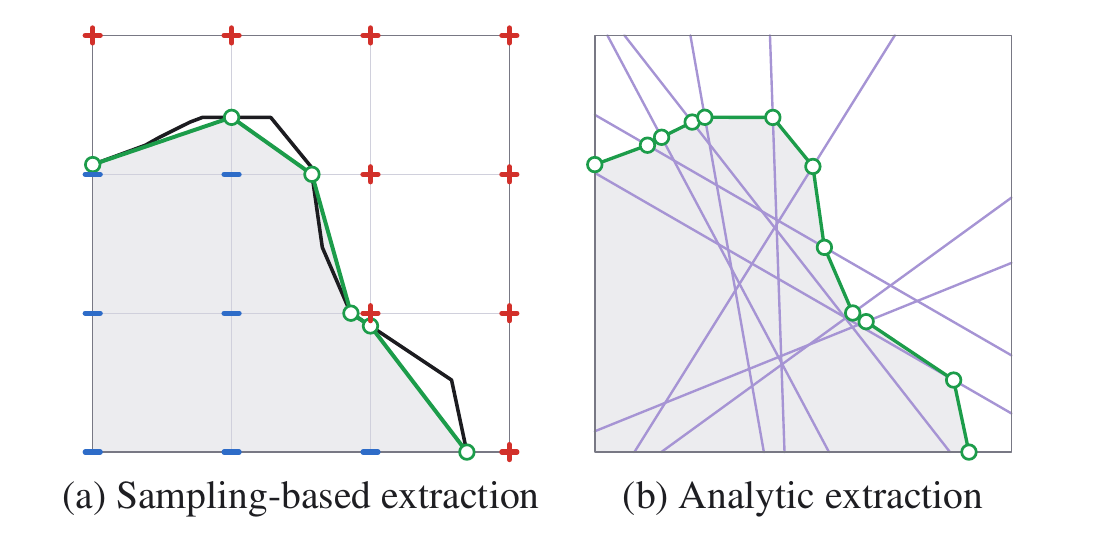}
    \caption{A schematic 2D illustration of sampling-based and analytic isosurface extraction for CPWA functions, such as ReLU MLPs. Sampling-based extraction approximates the network's zero-level set from finitely sampled field values, whereas analytic extraction recovers the zero-level set by exploiting the CPWA structure. Black denotes the network's zero-level set, and green denotes the extracted surface.}
  \label{fig:sampling-vs-analytic}
\end{figure*}

\section{Preliminary}
\subsection{Definitions}
\label{sec:tetra-networks}
\begin{definition}[Tetrahedral networks]
A \emph{tetrahedral network} $f = \nu^{(M)}\! \,\circ\, \tau$ is the composition of a multi-resolution tetrahedral positional encoder 
$\tau$ and a $M$-layer ReLU network $\nu^{(M)}$. $\nu^{(M)}$ is defined as follows:
\begin{equation}
\nu^{(M)} = \rho^{(M)} \, \circ \,\sigma^{(M-1)} \,\circ \,\rho^{(M-1)} \,\circ \,\cdots \,\circ \,\sigma^{(1)} \,\circ \,\rho^{(1)}.
\label{eq:tetrahedral-network}
\end{equation}
Here, $\sigma(\mathbf{x})=\max(\mathbf{x},0)$ and $\rho^{(i)}(\mathbf{x})=W^{(i)}\mathbf{x}+b^{(i)}$.
\end{definition}

\begin{remark}[Piecewise affine property of tetrahedral networks] 
The encoder $\tau$ maps an input $\mathbf{x}\in\mathbb{R}^3$ to a feature vector by performing barycentric interpolation at each level of the multi-resolution tetrahedral positional encoder and concatenating the resulting feature vectors. Since barycentric interpolation is affine, the interpolated feature vector at each level is affine in $\mathbf{x}$ within the containing tetrahedron. Moreover, because the concatenation of affine functions is affine, the output of $\tau$ remains affine on such regions (for detailed proofs, see Appendix~\ref{appx:theory}). Consequently, $\tau$ defines a piecewise affine mapping over the input domain. The ReLU network $\nu$ is also piecewise affine in its input, so its composition, the tetrahedral network $f$ is piecewise affine \wrt $\mathbf{x}$.
\end{remark}


\begin{definition}[Sign vectors]
\label{sec:sign-vectors}
Let \(\nu^{(m)}_k\) denote the \(k\)-th neuron in the \(m\)-th layer of an \(M\)-layer ReLU network \(\nu^{(M)}\), and let \(\nu^{(m)}_k(\mathbf{x})\) denote its pre-activation value at input \(\mathbf{x}\).
The \emph{sign vector} associated with $\mathbf{x}$ at that neuron is defined as:
\begin{equation}
\mathbf{s}^{(m)}_k(\mathbf{x}) = [\,\gamma^{(1)}_1(\mathbf{x}),\, \gamma^{(1)}_2(\mathbf{x}),\, \dots,\, \gamma^{(m)}_k(\mathbf{x})\,],
\end{equation}
where each scalar entry $\gamma^{(i)}_j(\mathbf{x})$ is defined using a small positive constant $\epsilon_s$ as:
\begin{equation}
\gamma^{(i)}_j(\mathbf{x}) =
\begin{cases}
+1, & \text{if } ~\nu^{(i)}_{j}(\mathbf{x})~ > ~~\epsilon_s,\\
~~~0, & \text{if } |\nu^{(i)}_{j}(\mathbf{x})| \le ~~\epsilon_s,\\
-1, & \text{if } ~\nu^{(i)}_{j}(\mathbf{x})~ < -\epsilon_s.
\end{cases}
\end{equation}
Here, $\gamma^{(i)}_j(\mathbf{x}) = 0$ indicates that $\mathbf{x}$ lies on the decision boundary $\mathcal{H}^{(i)}_j = \{\mathbf{x} \mid \nu^{(i)}_j(\mathbf{x}) = 0\}$. The values $+1$ and $-1$ correspond to the two half-spaces, \ie, the two linear regions on either side of this boundary.
\end{definition}

\subsection{Edge Subdivision}
\label{sec:edge-subdivision}
Through successive ReLUs, each decision boundary $\mathcal{H}^{(m)}_k$ becomes a \emph{folded hyperplane}~\cite{grigsby2022transversality,berzins2023polyhedral} in the input space.
Starting from an initial skeleton with vertices $\mathcal{V}$ and edges $\mathcal{E}$ (\eg, a cube bounding the scene), we sequentially process all boundaries $\mathcal{H}^{(m)}_k$ in increasing order of $(m,k)$, updating $\mathcal{V}$ and $\mathcal{E}$ at each step.
For each boundary $\mathcal{H}^{(m)}_k$, every intersected edge is subdivided.
A sign change of $\gamma^{(m)}_k$ at the two endpoints indicates that the edge crosses $\mathcal{H}^{(m)}_k$ and that its vertices lie in different linear regions. 
For an edge $(\mathbf{x}_0, \mathbf{x}_1)$ satisfying $\gamma^{(m)}_k(\mathbf{x}_0)\,\gamma^{(m)}_k(\mathbf{x}_1) < 0$, let $\nu^{(m)}_k(\mathbf{x}_0) = d_0$ and $\nu^{(m)}_k(\mathbf{x}_1) = d_1$. 
Then, the intersection is computed as:
\begin{equation}
\hat{\mathbf{x}}_{0,1} = (1 - w)\mathbf{x}_0 + w\mathbf{x}_1,
\quad 
w = |d_0|/|d_0 - d_1|.
\end{equation}
The intersection point $\hat{\mathbf{x}}_{0,1}$, satisfying $\gamma^{(m)}_k(\hat{\mathbf{x}}_{0,1}) = 0$, 
is added to $\mathcal{V}$, and the corresponding edge in $\mathcal{E}$ is split into two edges, $(\mathbf{x}_0, \hat{\mathbf{x}}_{0,1})$ and $(\hat{\mathbf{x}}_{0,1}, \mathbf{x}_1)$.

The tricky part, however, is that, when a folded hyperplane subdivides linear regions (convex polyhedra), the induced intersection polygon determines which new edges must be added to $\mathcal{E}$ (see \cref{fig:main_method}c).
To handle this consistently without explicitly enumerating all linear regions, we employ the \emph{sign vectors}~\cite{berzins2023polyhedral} defined in \cref{sec:sign-vectors}.
A candidate edge formed by two previously computed intersection points is considered valid only when the following conditions are satisfied:
First, their sign vectors must match on all nonzero entries, ensuring that both points belong to the same linear region. Notice that a zero entry is treated as a wildcard that can agree with either $-1$ or $+1$ (for the details, see \cref{sec:preliminary-perturbation}).
Second, if a vertex pair shares at least two zero entries in their sign vectors, the pair is connected, since both vertices lie on the same pair of hyperplanes. For a high-level intuition and a step-by-step summary, see Appendix~\ref{supp:edge-subdiv}.

Notice that, when integrated with our multi-resolution tetrahedral positional encoder (\cref{sec:mtd_tetraencoder}), we augment the sign vector by appending the grid-based region indicator (\cref{sec:region-indicators}) to maintain consistency, while the rest of the procedure remains unchanged.

\subsection{Perturbation with Sign Vectors}
\label{sec:preliminary-perturbation}
Perturbing each zero entry in a sign vector to both $+1$ and $-1$ generates
the sign vectors of all neighboring parent linear regions.
This provides a systematic way to explore local adjacency relationships between regions.
We later combine this perturbation scheme with the grid-based region indicators introduced in \cref{sec:perturbation}, enabling efficient adjacency tracking across the tetrahedral network structure.
\section{Method}
We introduce the design of the multi-resolution tetrahedral positional encoder \(\tau\) and its input preconditioner.
The subsequent sections describe our mesh extraction pipeline, which consists of 1) constructing the initial skeleton, 2) performing grid-aware edge subdivision, and 3) reconstructing the surface.
\begin{figure*}[t]
  \centering
  \includegraphics[width=1.0\textwidth]{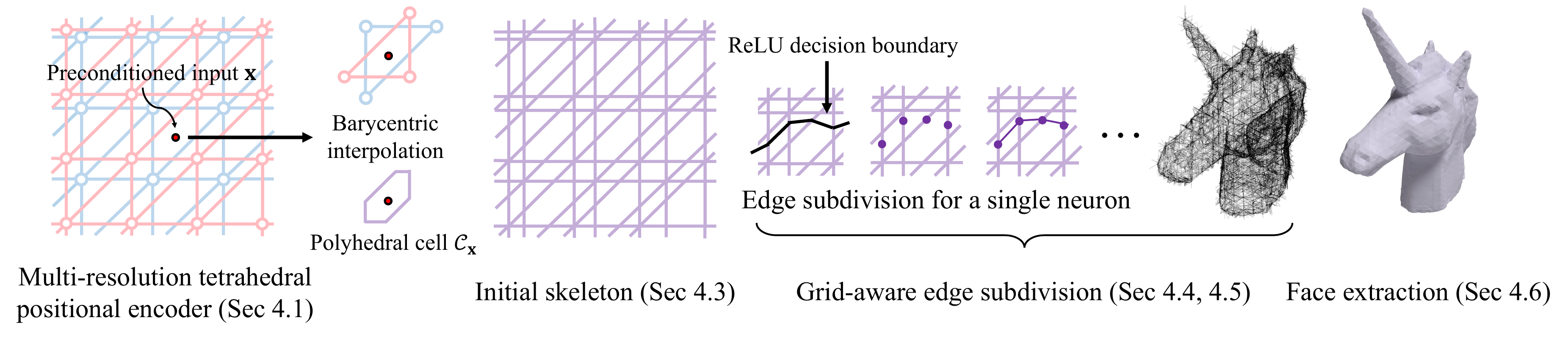}%
  \vspace{-1em}
  \caption{
  Overview of \emph{TetraSDF}. A preconditioned input $\mathbf{x}$ is mapped by the multi-resolution tetrahedral positional encoder to interpolated features within its containing polyhedral cell $\mathcal{C}_{\mathbf{x}}$ (\cref{sec:mtd_tetraencoder,sec:precondition}). 
  These cells form the encoder-induced polyhedral complex (the initial skeleton), from which we start edge subdivision (\cref{sec:grid-skeleton}).
  We then perform grid-aware edge subdivision that jointly tracks polyhedral cells and ReLU MLP linear regions to obtain the candidate vertex and edge sets $(\mathcal{V}, \mathcal{E})$ (\cref{sec:region-indicators,sec:perturbation}), and finally extract mesh faces from $(\mathcal{V}, \mathcal{E})$ to obtain the final mesh (\cref{sec:face-extraction}). In (a)–(c), we zoom into a single edge subdivision iteration for a neuron.
  }
  \label{fig:main_method}
  \vspace{-1.0em}
\end{figure*}

\subsection{Multi-resolution Tetrahedral Positional Encoder}
\label{sec:mtd_tetraencoder}
Grid-based positional encoders are a common approach to mitigate spectral bias for ReLU MLPs, but a widely used design choice based on trilinear interpolation~\cite{mueller2022instant,chen2022tensorf} breaks the CPWA structure and hinders analytic extraction.
Our multi-resolution tetrahedral encoder replaces it with barycentric interpolation, preserving high-frequency modeling while keeping the overall mapping CPWA.
\begin{definition}[Multi-resolution tetrahedral  grid]
\label{mtd:tet-enc}
Let $L \in \mathbb{N}$ be the number of resolution levels, and $N_{\min}$ and $ N_{\max}$ be the minimum and maximum grid resolutions, respectively, where $N_{\min} < N_{\max}$. 
Then, the geometric progression ratio $\gamma$ is defined as follows~\cite{mueller2022instant}:
\begin{equation}
    \gamma := \exp\!\left( \frac{\ln N_{\max} - \ln N_{\min}}{L-1} \right),
\end{equation}
while the $\ell$-level resolution $N_\ell$ is defined as follows:
\begin{equation}
    N_\ell := \left\lfloor N_{\min}\, \gamma^{\ell} \right\rfloor,
    \quad \ell \in \{0,1,\dots,L-1\}.
    \label{eq:mtd:multi-tet-res}
\end{equation}
\end{definition}
At each level $\ell$, the unit cube $\mathcal{S}\in[0,1]^3$ is uniformly subdivided into $N_\ell^3$ cube cells, each of which is further decomposed into six congruent tetrahedra following the tetrahedral convention~\cite{burstedde2016tetrahedral} in~\cref{fig:tetsubdiv_mtd}.
We further discuss the design rationale in Appendix~\ref{sec:appendix_6tet}.

\begin{definition}[Tetrahedral positional encoder]
For a query point $\mathbf{x} \in \mathcal{S}$, let $\mathcal{T}^{(\ell)}(\mathbf{x})$ be 
any\footnote{ Singular cases where $\mathbf{x}$ lies on the boundaries are negligible, as they are handled implicitly by our interpolation scheme.} tetrahedron at level $\ell$ that contains $\mathbf{x}$, 
and $V(\mathcal{T}^{(\ell)}(\mathbf{x}))$ be its set of four vertices.
Each vertex $v \in V(\mathcal{T}^{(\ell)}(\mathbf{x}))$
is mapped to the index $i = h(v)$ for the hash table via the spatial hash function 
$h$~\cite{teschner2003optimized}. 
The entry of index $i$ corresponds to the learnable feature vector
$H^{(\ell)}_{i} \in \mathbb{R}^d$. 
The barycentric weight\footnote{
Barycentric coordinates specify a point with respect to the vertices of a simplex 
by expressing the point as a convex combination of the vertices,
$\mathbf{x}=\sum_i w_i v_i$ with $\sum_i w_i = 1$ and $w_i \ge 0$. 
The tuple $(w_0,\dots,w_n)$ is called the barycentric coordinates of $\mathbf{x}$, 
and each coefficient $w_i$ is the barycentric weight associated with the vertex $v_i$.
}
 of $\mathbf{x}$ with respect to $v$ is denoted by 
$w^{(\ell)}_v(\mathbf{x})$.
Then, the tetrahedral positional encoder
$
\tau : \mathbb{R}^3 \rightarrow \mathbb{R}^{dL} 
$
is defined as:
\begin{equation}
    \tau(\mathbf{x})
    :=
    \bigoplus_{\ell=0}^{L-1}
    \left(
        \sum_{v \in V(\mathcal{T}^{(\ell)}(\mathbf{x}))}
        w^{(\ell)}_v(\mathbf{x}) \, H^{(\ell)}_{i}
    \right),
\end{equation}
where $\oplus$ denotes concatenation over all resolution levels.
\end{definition}

\begin{definition}[Polyhedral cells]
For $\mathbf{x} \in \mathcal{S}$ and each level $\ell$, the choice of $\mathcal{T}^{(\ell)}(\mathbf{x})$ determines which feature vectors are selected at that level.
We define the \emph{polyhedral cell} of $\mathbf{x}$ as the maximal region on which this combination of selected feature vectors across all levels remains constant:
\begin{equation}
    \mathcal{C}_{\mathbf{x}}
    :=
    \bigcap_{\ell=0}^{L-1} \mathcal{T}^{(\ell)}(\mathbf{x}).
\end{equation}
where $\bigcap_{\ell=0}^{L-1}$ denotes the geometric intersection over all resolution levels $\ell$ of $\mathbf{x}$-containing tetrahedra.
The collection of all cells over $\mathbf{x}\in\mathcal{S}$ can be written as
$\mathcal{C} := \bigcup_{\mathbf{x} \in \mathcal{S}} \mathcal{C}_{\mathbf{x}}$,
which forms a polyhedral complex that partitions $\mathcal{S}$.
On each cell $\mathcal{C}_{\mathbf{x}}$, the encoder $\tau$ is affine. 
Polyhedral cell boundaries are where $\tau$ switches affine regions, forming fixed partitions that further refine the regions by the subsequent ReLU network. Unlike the adaptive partitions learned by the ReLU layers, those of the tetrahedral positional encoder are determined by hyperparameters and remain fixed during training.
\end{definition}

\begin{figure*}[t]
  \centering
  \includegraphics[width=1.0\textwidth]{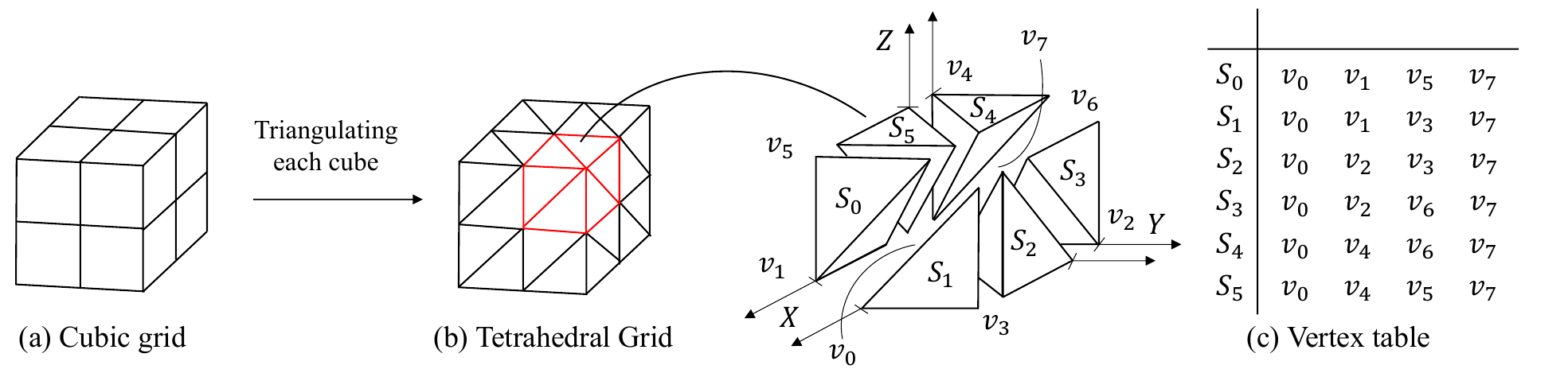}
  \caption{At each level, each cube cell is decomposed into six congruent tetrahedra.}
  \label{fig:tetsubdiv_mtd}
\end{figure*}

\subsection{Input Preconditioning}
\label{sec:precondition}
In our positional encoder, the barycentric coordinates $\mathbf{w}(\mathbf{x}) = [w_1(\mathbf{x}), w_2(\mathbf{x}), w_3(\mathbf{x})]^\top$ with $w_0(\mathbf{x}) = 1 - \sum_{i=1}^3 w_i(\mathbf{x})$ are an affine mapping of the input $\mathbf{x}$ inside each tetrahedron $\mathcal{T}$ (see Appendix~\ref{appx:theory}). 
Hence, there exist a constant matrix $\mathbf{J}_\mathcal{T} \in \mathbb{R}^{3 \times 3}$ and vector $\mathbf{b}_\mathcal{T}$ such that
\begin{equation}
\mathbf{w}(\mathbf{x}) = \mathbf{J}_\mathcal{T} \mathbf{x} + \mathbf{b}_\mathcal{T}, \quad \text{and} \quad 
d\mathbf{w} = \mathbf{J}_\mathcal{T}\, d\mathbf{x}.
\end{equation}
The Jacobian $\mathbf{J}_\mathcal{T}$ describes how barycentric weights vary \wrt spatial displacements, and therefore controls the direction-dependent sensitivity of feature updates during backpropagation.
Anisotropy in $\mathbf{J}_\mathcal{T}$ leads to uneven effective learning rates along different directions.
For the six-tetrahedra subdivision used in our encoder (\cref{fig:tetsubdiv_mtd}), the average local metric
$\mathbf{M} = \tfrac{1}{6}\sum_\mathcal{T} \mathbf{J}_\mathcal{T}^\top \mathbf{J}_\mathcal{T}$
is inherently anisotropic: directions orthogonal to the cube body diagonal are disproportionately amplified.
To mitigate this geometric bias, we introduce a global linear preconditioner $\mathbf{A}^*$ and feed $\mathbf{x}' = \mathbf{A}^* \mathbf{x}$ into the encoder.
This transformation makes it isotropic and reduces the spectral condition number from $16.39$ to $5.05$.
For simplicity, we reuse $\mathbf{x}$ to denote the preconditioned coordinates hereafter.
The explicit form and derivation of $\mathbf{A}^*$ are given in Appendix~\ref{supp:global_preconditioner}.

\subsection{Initial Skeleton Extraction from the Encoder}
\label{sec:grid-skeleton}

Naively enumerating all $\mathcal{C}_{\mathbf{x}}$ is prohibitive at multi-resolution scales, so we propose a tensorized algorithm that exploits shared plane normals with level-dependent offsets to extract the vertices and edges of $\mathcal{C}$ as an initial skeleton.
Since cell vertices are the intersections of tetrahedra across resolutions, we exploit the grid’s regularity by representing $\mathcal{C}$ as a union of parallel-plane sets.
The slopes of the tetrahedral subdivisions are constant across resolution levels, their normals are shared, while their offsets vary. 
Thus, multiple planes can be compactly represented as sharing a single normal but differing in offset. 
We denote the set of unique normals as
$\mathcal{N} = \{\mathbf{n}_0, \mathbf{n}_1, \dots, \mathbf{n}_i, \dots\}$,
and, for each normal $\mathbf{n}_i$, define the associated offsets across all resolution levels as
$\mathbf{d}^{(i)} = [\,d^{(i)}_0,\, d^{(i)}_1,\, \dots\,]^\top$.

Vertices are extracted as intersections of three or more planes with linearly independent normals, and edges as intersections of two or more non-parallel planes. 
Note that the tetrahedral subdivision in Fig.~\ref{fig:tetsubdiv_mtd} involves only six fixed plane normals, while their offsets vary across resolution levels. 
By grouping a small set of shared normals with their corresponding offsets, the extraction process can be efficiently implemented using parallel tensor operations. For the detailed algorithm, refer to Appendix~\ref{sec:polycomplex-construction}.

\subsection{Encoding Regions and Boundaries}
\label{sec:region-indicators}

Sign vectors allow us to identify the linear region of a preconditioned input $\mathbf{x}$ within the ReLU network $\nu^{(M)}$ and to determine whether $\mathbf{x}$ lies on a decision boundary.
However, the encoder $\tau$ further refines these regions according to the boundary of $\mathcal{C}_{\mathbf{x}}$.
To correctly represent regions under the full tetrahedral network $f = \nu^{(M)} \circ \tau$, we therefore require a more specialized indicator that also encodes the position of $\mathbf{x}$ within the grid structure induced by $\tau$.

\paragraph{Barycentric masks}
The barycentric coordinates of $\mathbf{x}$ in the tetrahedron $\mathcal{T}^{(\ell)}(\mathbf{x})$ at level~$\ell$ are defined as:
\[\mathbf{w}^{(\ell)}(\mathbf{x}) = [w_0^{(\ell)}(\mathbf{x}), w_1^{(\ell)}(\mathbf{x}), w_2^{(\ell)}(\mathbf{x}), w_3^{(\ell)}(\mathbf{x})].\]
Then, we define the barycentric mask at level~$\ell$ as:
\[
\mathbf{m}^{(\ell)}(\mathbf{x})
=
[m^{(\ell)}_0(\mathbf{x}),\,
 m^{(\ell)}_1(\mathbf{x}),\,
 m^{(\ell)}_2(\mathbf{x}),\,
 m^{(\ell)}_3(\mathbf{x})],
\]
where
\begin{equation}
m^{(\ell)}_i(\mathbf{x}) =
\begin{cases}
1, & w_i^{(\ell)}(\mathbf{x}) > \epsilon_b,\\[4pt]
0, & w_i^{(\ell)}(\mathbf{x}) \le \epsilon_b,
\end{cases}
\quad i \in \{0,1,2,3\},
\label{eq:bary-mask}
\end{equation}
and $\epsilon_b > 0$ is a small threshold.
A zero entry indicates that $\mathbf{x}$ lies on the boundary of $\mathcal{T}^{(\ell)}(\mathbf{x})$.
Accordingly, a single zero indicates that $\mathbf{x}$ lies on a face of the tetrahedron, two zeros indicate that it lies on an edge, and three zeros indicate that it lies on a vertex.
The \emph{barycentric masks} of $\mathbf{x}$ across all levels are concatenated to form:
\begin{equation}
\mathbf{m}(\mathbf{x}) = \bigoplus_{\ell=0}^{L-1} \mathbf{m}^{(\ell)}(\mathbf{x}) \in \{0,1\}^{4L}.
\end{equation}

Recall that $\mathcal{C}_{\mathbf{x}}$ is defined by the geometric intersection of the $\mathbf{x}$-containing tetrahedra $\mathcal{T}^{(\ell)}(\mathbf{x})$; thus the barycentric mask indicates on which boundary of the $\mathcal{C}_{\mathbf{x}}$ the point lies.
\paragraph{Region indicators}
To uniquely locate $\mathbf{x}$ within the $\mathcal{C}$, we introduce an additional spatial encoding, referred to as the \emph{region indicator}.
For a given level $\ell \in \{0, \dots, L-1\}$ with resolution $N_\ell$, 
the \emph{grid anchor} of a point $\mathbf{x}$ is defined as:
$\mathbf{a}^{(\ell)}(\mathbf{x}) := \big\lfloor N_\ell \mathbf{x} \big\rfloor$,
where $\lfloor \cdot \rfloor$ denotes the componentwise floor operator.
This anchor corresponds to the cube’s lexicographically smallest vertex 
(the ``$v_0$'' corner in \cref{fig:tetsubdiv_mtd}).
Each cube at level~$\ell$ is subdivided into six congruent tetrahedra.
The \emph{tetra offset} $t^{(\ell)}(\mathbf{x}) \in \{0,1,2,3,4,5\}$ indicates which of these tetrahedra anchored at $\mathbf{a}^{(\ell)}(\mathbf{x})$ contains $\mathbf{x}$, according to the subdivision scheme shown in \cref{fig:tetsubdiv_mtd}c.
We then define the \emph{tetra index} at level~$\ell$ as:
$\mathbf{r}^{(\ell)}(\mathbf{x}) :=
[\,\mathbf{a}^{(\ell)}(\mathbf{x}),\; t^{(\ell)}(\mathbf{x})\,] \in \mathbb{Z}^4$.
The \emph{region indicator} is obtained by concatenating all levelwise tetra indices:
\begin{equation}
\mathbf{r}(\mathbf{x}) :=
\bigoplus_{\ell=0}^{L-1} \mathbf{r}^{(\ell)}(\mathbf{x})
\in \mathbb{Z}^{4L}.
\end{equation}

This region indicator encodes the intersection of the tetrahedra containing $\mathbf{x}$ across all levels, serving as a unique spatial index of the cell $\mathcal{C}_{\mathbf{x}}$ within the $\mathcal{C}$.

\subsection{Identifying Neighboring Cells on the Grid}
\label{sec:perturbation}
First, for the grid part, we use both the region indicator $\mathbf{r}(\mathbf{x}) \in \mathbb{Z}^{4L}$ 
and the barycentric mask $\mathbf{m}(\mathbf{x}) \in \{0,1\}^{4L}$ 
to identify neighboring cells when the point $\mathbf{x}$ lies on the boundary of its current cell $\mathcal{C}_{\mathbf{x}}$.
For each resolution level~$\ell$, we consider the corresponding 
$\mathbf{r}^{(\ell)}(\mathbf{x})$ and $\mathbf{m}^{(\ell)}(\mathbf{x})$.

If one or more entries of $\mathbf{m}^{(\ell)}(\mathbf{x})$ are zero,
then $\mathbf{x}$ lies on the corresponding boundary of $\mathcal{T}^{(\ell)}(\mathbf{x})$.
To enumerate all neighboring cells, we perturb the tetra index $\mathbf{r}^{(\ell)}(\mathbf{x})$
according to the zero pattern in $\mathbf{m}^{(\ell)}(\mathbf{x})$.
For each level~$\ell$, we precompute a lookup table of neighbor configurations (see Appendix~\ref{supp:neighbor_lut}).
Each entry in this table consists of an anchor displacement
$\Delta_i \in \mathbb{Z}^3$ and a tetra offset $t_i \in \{0,\dots,5\}$,
corresponding to face, edge, or vertex adjacency.
The pair $(\Delta_0, t_0) = ([0,0,0], t^{(\ell)}(\mathbf{x}))$ represents
$\mathcal{T}^{(\ell)}(\mathbf{x})$ itself, while the remaining entries specify neighboring tetrahedra.
The tetra index of the $i$-th neighbor at level~$\ell$ is then given by
$\mathbf{r}^{(\ell)}_i(\mathbf{x})
= \big[\mathbf{a}^{(\ell)}(\mathbf{x}) + \Delta_i,\; t_i\big]$.

To obtain the complete set of neighboring polyhedral cells, 
the levelwise tetra indices $\mathbf{r}^{(\ell)}_i(\mathbf{x})$ are concatenated across all levels for every possible combination of neighbors. 
This expansion yields $N_{\text{grid}} = K_0 K_1 \dots K_{L-1}$ region indicators where $K_\ell$ denotes the number of neighboring tetrahedra including itself.
The region indicator corresponding to the $n$-th neighboring cell for $\mathcal{C}_\mathbf{x}$ is:
\begin{equation}
\mathbf{r}_n(\mathbf{x}) 
= \mathbf{r}^{(0)}_{p}(\mathbf{x})\oplus\,
       \mathbf{r}^{(1)}_{q}(\mathbf{x})\oplus\, 
       \cdots\,\oplus
       \,\mathbf{r}^{(L-1)}_{r}(\mathbf{x})
\end{equation}
where each $\mathbf{r}^{(\ell)}_{j}(\mathbf{x})$ denotes the tetra index of the $j$-th neighboring tetrahedron at level~$\ell$. The collection of all such indices defines the set of grid-based neighboring regions:
\begin{equation}
\mathcal{A}_{\text{grid}}(\mathbf{x})
= 
\{\,\mathbf{r}_0(\mathbf{x}),\, \mathbf{r}_1(\mathbf{x}),\, \cdots,\, \mathbf{r}_{N_{\text{grid}}-1}(\mathbf{x})\,\}.
\end{equation}

Second, for the ReLU MLP part, we follow the same procedure described for $\nu^{(m)}_k$ in \cref{sec:edge-subdivision}. 
The sign-vector $\mathbf{s}^{(m)}_k(\mathbf{x})$ is perturbed at its zero entries, 
producing all adjacent regions of ReLU MLP around the boundary on which $\mathbf{x}$ lies. 
The set of perturbed sign-vectors is denoted by:
\begin{equation}
\mathcal{A}_{\text{relu}}(\mathbf{x})
=
\{\,\mathbf{s}^{(m)}_{(k,0)}(\mathbf{x}),\, \cdots,\, \mathbf{s}^{(m)}_{(k,{N_{\text{relu}}-1})}(\mathbf{x})\,\},
\end{equation}
where $\mathbf{s}^{(m)}_{(k,i)}(\mathbf{x})$ denotes the sign-vector of the $i$-th adjacent region, 
including the one containing $\mathbf{x}$ itself. 
Here, $N_{\text{relu}} = 2^{n_0}$, with $n_0$ denoting the number of zero entries in $\mathbf{s}^{(m)}_k(\mathbf{x})$.

Finally, we combine the grid-based and ReLU-based neighbor sets 
to obtain the complete set of parent regions adjacent to the query point~$\mathbf{x}$:
$
\mathcal{A}(\mathbf{x}) 
=
\big\{\,\mathbf{r}_i(\mathbf{x})\,\oplus\, \mathbf{s}^{(m)}_{(k,j)}(\mathbf{x})\, 
\}
$
where $\mathbf{r}_i(\mathbf{x}) \in \mathcal{A}_{\text{grid}}(\mathbf{x})$ and
$\mathbf{s}^{(m)}_{(k,j)}(\mathbf{x}) \in \mathcal{A}_{\text{relu}}(\mathbf{x})
$.
For two query points $\mathbf{x}_a$ and $\mathbf{x}_b$,
comparing $\mathcal{A}(\mathbf{x}_a)$ and $\mathcal{A}(\mathbf{x}_b)$ provides a test for region equivalence: if they share a common element, two points lie within or on the boundary of the same region.

\subsection{Face Extraction}
\label{sec:face-extraction}

After the edge subdivision (\cref{sec:edge-subdivision}) using the sign vectors (\cref{sec:sign-vectors}) and the region indicator (\cref{sec:region-indicators}), we obtain the candidate vertex set $\mathcal{V}$ and the edge set $\mathcal{E}$. We then select the zero-level subsets from these as follows:
$\mathcal{V}^\ast = \{\mathbf{x} \in \mathcal{V} \mid |f(\mathbf{x})| \le \epsilon_f\},\quad 
 \mathcal{E}^\ast = \{(\mathbf{x}_a, \mathbf{x}_b) \in \mathcal{E} \mid \mathbf{x}_a, \mathbf{x}_b \in \mathcal{V}^\ast\},$ where $\epsilon_f$ is a small threshold used to extract the zero-level set of $f$.
The final triangle mesh is obtained through a straightforward procedure that connects adjacent vertices according to their local connectivity~\cite{Kim2024}.

\section{Experiments}
\label{sec:exp}
\noindent\textbf{Datasets}~~
We evaluate on three datasets: the Stanford 3D Scanning Repository with 5 meshes~\cite{curless1996volumetric}, ABC with 100 randomly selected meshes~\cite{Koch_2019_CVPR}, and Thingi10K with 500 randomly selected closed meshes~\cite{Thingi10K}.
We follow the preprocessing of DeepSDF normalization~\cite{Park2019DeepSDF}.
For each ground-truth mesh, we sample $500$K SDF query points, drawing the majority from a narrow band around the surface and the remainder uniformly within the bounding volume.

\begin{figure*}[t]
  \centering
  \includegraphics[width=1.0\linewidth]{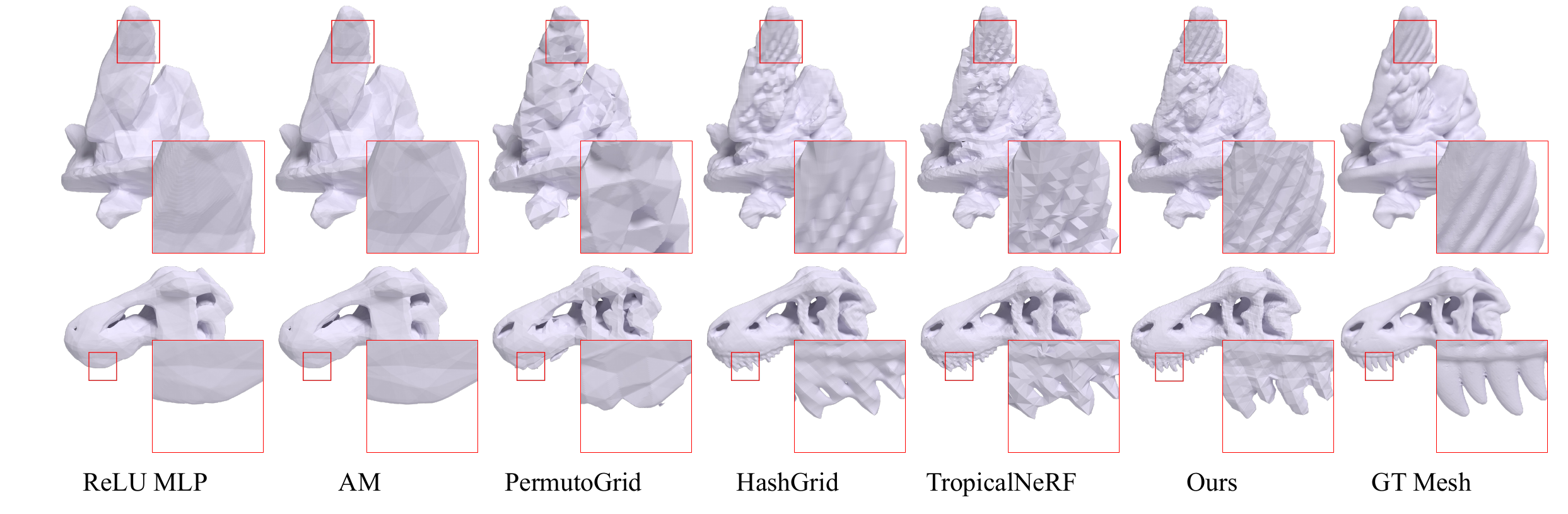}
  \caption{Qualitative comparison on Thingi10K corresponding to \cref{tab:baselines_cd}.}
  \label{fig:baselines}
\end{figure*}

\noindent\textbf{Settings}~~
We use a multi-resolution tetrahedral positional encoder with $L=4$ levels and feature dimension $d=2$ per level, followed by a ReLU MLP with three hidden layers of width 12 each.
We consider three resolution settings: \emph{Small} $(N_{\min},N_{\max})=(2,32)$, \emph{Medium} $(4,64)$, and \emph{Large} $(8,128)$.
We train networks for 10 epochs using an $\ell_1$ SDF loss and an eikonal regularizer with weight $\lambda_{\mathrm{eik}} = 5 \times 10^{-3}$.
All experiments use a single NVIDIA V100 GPU; training a network for each SDF takes about 1 minute.

\noindent\textbf{Baselines}~~
We consider both network architectures for SDF learning and meshing algorithms for isosurface extraction.
For network architectures, we use grid-based positional encoders—HashGrid~\cite{mueller2022instant} and PermutoGrid~\cite{rosu2023permutosdf}—each followed by a ReLU MLP with the same depth, width, and resolution settings.
We also include a plain ReLU MLP configured as in Analytic Marching (AM)~\cite{Lei2020AnalyticMarching}, with ten hidden layers of width~90. Unless otherwise noted, meshes for these networks are extracted using Marching Cubes at $512^3$ resolution.
We apply the corresponding analytic methods: the AM extractor to the plain ReLU MLP and TropicalNeRF~\cite{Kim2024} to the HashGrid. In both cases, meshes are extracted from the same corresponding trained networks used for SDF evaluation.
For sampling-based methods, we compare against the Marching Cubes (MC), Marching Tetrahedra (MT), and Dual Contouring (DC). We use publicly released implementations for HashGrid, PermutoGrid, AM, and TropicalNeRF.

\noindent\textbf{Evaluation metrics}~~
We measure geometric accuracy with Chamfer Distance (CD), computed from $10^6$ surface samples between the extracted mesh and the ground-truth (GT) mesh.
We also assess self-consistency—agreement between the extracted mesh and the network’s isosurface—using three additional metrics.
For Surface-sampled SDF (SSDF) and Angular Difference (AD), we uniformly sample $10^6$ points on the surface of the extracted mesh and query the network: SSDF is the mean absolute SDF value predicted by the network, while AD is the mean absolute angle between the mesh normals and the network-derived normals.
Vertex-sampled SDF (VSDF) is SSDF computed at the extracted mesh’s vertices.
In all three metrics, zero indicates perfect self-consistency, \ie, the extracted mesh exactly matches the network’s zero-level set.

\subsection{Ground-truth Accuracy}
\label{sec:gt_accuracy}
\begin{table}[t]
\centering
\footnotesize
\setlength{\tabcolsep}{2.6pt}
\renewcommand{\arraystretch}{0.95}

\begin{minipage}[t]{0.49\columnwidth}
\centering
\setlength{\abovecaptionskip}{0pt}
\setlength{\belowcaptionskip}{0pt}

\captionof{table}{CD ($\times 10^{-6}$) to ground-truth meshes on Thingi10K, ABC, Stanford at (\emph{Large}) resolution setting.}
\label{tab:baselines_cd}
\resizebox{0.93\linewidth}{!}{%
\begin{tabular}{lccc}
\toprule
Method & Stanford & ABC & Thingi10K \\
\midrule
ReLU MLP      & 5480 & 4584 & 3779 \\
AM            & 5475 & 4570 & 3775 \\
PermutoGrid   & 3644 & 3104 & 2897 \\
HashGrid      & \textbf{1659} & 1854 & 1763 \\
TropicalNeRF  & 1737 & 1866 & 1809 \\
\midrule
Ours          & 1718 & \textbf{1758} & \textbf{1722}\\
\bottomrule
\end{tabular}%
}
\end{minipage}\hfill
\begin{minipage}[t]{0.49\columnwidth}
\centering
\setlength{\abovecaptionskip}{0pt}
\setlength{\belowcaptionskip}{0pt}

\captionof{table}{CD ($\times 10^{-6}$) to ground-truth meshes on Stanford with analytic extraction baselines across resolutions.}
\label{tab:different_resolution}
\resizebox{\linewidth}{!}{%
\begin{tabular}{clcccccc}
\toprule
$R$ & Method & Bunny & Dragon & Happy & Arma. & Lucy & Avg. \\
\midrule
-- & AM            & 3628 & 6021 & 6839 & 4577 & 6309 & 5475 \\
\midrule
S  & Tropical       & 4093 & 5674 & 7187 & \textbf{3671} & 6208 & 5367 \\
   & Ours           & \textbf{2810} & \textbf{4160} & \textbf{4889} & 3794 & \textbf{5398} & \textbf{4210} \\
\midrule
M  & Tropical       & 2201 & 2606 & 2984 & 2506 & 3555 & 2770 \\
   & Ours           & \textbf{2024} & \textbf{2483} & \textbf{2362} & \textbf{2300} & \textbf{3141} & \textbf{2462} \\
\midrule
L  & Tropical       & 1842 & \textbf{1726} & 1792 & 1611 & 1714 & 1737 \\
   & Ours           & \textbf{1817} & 1732 & \textbf{1779} & \textbf{1581} & \textbf{1681} & \textbf{1718} \\
\bottomrule
\end{tabular}%
}
\end{minipage}

\end{table}
We report CD to ground-truth meshes across network baselines in~\cref{tab:baselines_cd} under the \emph{Large} resolution setting. Our method attains the lowest CD on Thingi10K and ABC among all baselines and surpasses all analytic-extraction methods across datasets.
\cref{fig:baselines} provides qualitative comparisons on Thingi10K for~\cref{tab:baselines_cd}.
A plain ReLU MLP shows limited ability to represent complex SDFs.
Moreover, HashGrid may smooth high-curvature regions because of the implicit smoothing of trilinear interpolation, whereas our method better preserves sharp features.
\begin{table}[t]
\centering
\setlength{\abovecaptionskip}{0pt}
\setlength{\belowcaptionskip}{0pt}
\scriptsize
\setlength{\tabcolsep}{2.2pt}
\renewcommand{\arraystretch}{1.02}
\captionof{table}{GT mesh-sampled SDF MAE ($\times 10^{-3}$, $\downarrow$).
We sample $10^6$ points uniformly on the GT mesh surface, query the network, and report the mean absolute SDF value.}
\label{gt_sdf}
\begin{tabular}{lccc ccc ccc}
\toprule
\multirow{2}{*}{Method} &
\multicolumn{3}{c}{Small} &
\multicolumn{3}{c}{Medium} &
\multicolumn{3}{c}{Large} \\
& Stanford & ABC & Thingi10K & Stanford & ABC & Thingi10K & Stanford & ABC & Thingi10K \\
\midrule
PermutoGrid & 7.62 & 4.54 & 5.36 & 4.34 & 2.66 & 3.05 & 2.24 & 1.53 & 1.50 \\
HashGrid    & 3.24 & 2.19 & 2.67 & \textbf{1.36} & 1.21 & 1.12 & \textbf{0.53} & \textbf{0.61} & 0.50 \\
\midrule
Ours        & \textbf{2.82} & \textbf{1.74} & \textbf{1.79} & 1.39 & \textbf{1.12} & \textbf{1.06} & 0.62 & \textbf{0.61} & \textbf{0.49} \\
\bottomrule
\end{tabular}
\end{table}
We also compare analytic isosurface extraction baselines under different resolution settings on the Stanford dataset in \cref{tab:different_resolution} and \cref{fig:ablation_res}, reporting CD ($\times 10^{-6}$) against ground-truth meshes.
As the resolution decreases, TropicalNeRF’s accuracy degrades markedly, suggesting that its geometric heuristics struggle with the inherent nonlinearity of trilinear interpolation, whereas our method avoids these issues due to our CPWA structure and analytic extraction.
The qualitative results in \cref{fig:ablation_res} show that—even under the \emph{Small} setting—our method remains visually reasonable.

To decouple representation fidelity from extraction faithfulness, we report GT-mesh-sampled SDF mean absolute error (MAE) in \cref{gt_sdf}, which evaluates the learned field independently of the extractor.
Our GT-mesh SDF MAE is comparable to or better than grid-based encoder baselines across datasets and settings, indicating that our encoder design does not sacrifice SDF fitting fidelity while enabling exact isosurface recovery.
\cref{tab:ablation_a} reports the quantitative effect of the input preconditioner $\mathbf{A}^*$ on Thingi10K, with CD ($\times 10^{-6}$) measured against the ground-truth mesh under the \emph{Large} resolution setting, and \cref{fig:bunny_wA} shows qualitative results on the Stanford Bunny.
Both indicate improved SDF accuracy for our network with $\mathbf{A}^*$. Applying our encoder-derived $\mathbf{A}^*$ reduces CD for our method but worsens it for HashGrid, suggesting that $\mathbf{A}^*$ is tailored to our encoder rather than a generic preconditioner.
Under matched capacity—same levels, per-level table size, and resolution—the total number of learnable features is identical to HashGrid.
See Appendix~\ref{sec:appendix_Astar_hashgrid} for further analysis.

\begin{table*}[t]
\small
\centering
\setlength{\abovecaptionskip}{0pt}
\setlength{\belowcaptionskip}{0pt}
\captionof{table}{
Self-consistency of different meshing methods on Thingi10K, ABC, and Stanford:
SSDF and VSDF ($\times 10^{-6}$) and AD ($^\circ$) between each extracted mesh and its SDF network.
Marching Cubes at each resolution is applied to our trained network.
}
\label{tab:baselines_consistency}
\begin{tabular}{lccccccccc}
\toprule
\multirow{2}{*}{Method} &
\multicolumn{3}{c}{Thingi10K} &
\multicolumn{3}{c}{ABC} &
\multicolumn{3}{c}{Stanford} \\
\cmidrule(lr){2-4}\cmidrule(lr){5-7}\cmidrule(lr){8-10}
& SSDF~$\downarrow$ & VSDF~$\downarrow$ & AD~$\downarrow$
& SSDF~$\downarrow$ & VSDF~$\downarrow$ & AD~$\downarrow$
& SSDF~$\downarrow$ & VSDF~$\downarrow$ & AD~$\downarrow$ \\
\midrule
MC256       & 217.0 & 98.0 & 5.44 & 218.1 & 101.8 & 5.70 & 289.3 & 128.3 & 8.91 \\
MC512       & 65.2 & 27.0 & 3.10 & 66.7 & 29.2 & 3.23 & 100.1 & 40.2 & 5.81 \\
MC1024      & 17.6 & 1.79 & 1.68 & 18.9 & 8.26 & 1.72 & 20.5 & 6.00 & 2.19 \\
AM & 0.020 & 0.020 & 0.00 &
                         0.021 & 0.022 & 0.00 &
                         0.025 & 0.024 & 0.00 \\
TropicalNeRF           & 144.1 & 8.7 & 3.45 & 136.1 & 12.3 & 3.64 & 189.0 & 13.4 & 4.80 \\
\midrule
Ours                   & 0.078 & 0.082 & 0.00 &
                         0.077 & 0.080 & 0.00 &
                         0.076 & 0.078 & 0.00 \\
\bottomrule
\end{tabular}
\end{table*}

\begin{figure*}[t]
\centering
\footnotesize

\captionsetup[table]{skip=5pt}
\captionsetup[figure]{skip=6pt}
\setlength{\abovecaptionskip}{0pt}
\setlength{\belowcaptionskip}{0pt}

\begin{minipage}[t]{0.49\textwidth}
\centering

\small
\centering
\captionof{table}{CD ($\times 10^{-6}$) on the Stanford dataset for sampling-based extractors applied to our trained network under different resolution schedules, compared against pseudo ground-truth per method.}
\label{tab:mcmtdc}
\resizebox{\linewidth}{!}{%
\begin{tabular}{l rc rc rc}
\toprule
\multirow{2}*{Method} &
\multicolumn{2}{c}{Small} &
\multicolumn{2}{c}{Medium} &
\multicolumn{2}{c}{Large} \\
\cmidrule(lr){2-7}
 & {$|\mathcal{V}|\downarrow$} & {CD$\downarrow$}
 & {$|\mathcal{V}|\downarrow$} & {CD$\downarrow$}
 & {$|\mathcal{V}|\downarrow$} & {CD$\downarrow$} \\
\midrule
MC128  & 24,768  & 1734 & 24,735  & 1821 & 25,646  & 1871 \\
MC256  & 101,337 & 1355 & 103,444 & 1380 & 105,227 & 1418 \\
MC512  & 409,321 & 1290 & 418,464 & 1308 & 426,394 & 1323 \\
\midrule
Ours   & 29,557  & \textbf{1286} & 73,851 & \textbf{1301} & 280,929 & \textbf{1316} \\
\midrule[\heavyrulewidth]
MT128  & 96,890  & 1677 & 88,928 & 1740 & 90,116 & 1806 \\
MT256  & 396,516 & 1397 & 364,464 & 1363 & 370,935 & 1387 \\
\midrule
Ours   & 29,557  & \textbf{1368} & 73,851 & \textbf{1314} & 280,929 & \textbf{1332} \\
\midrule[\heavyrulewidth]
DC128  & 41,461  & 2099 & 41,794 & 1524 & 42,048 & 1735 \\
DC256  & 167,873 & 1351 & 169,164 & 1352 & 170,670 & 1437 \\
DC512  & 676,567 & \textbf{1329} & 682,080 & 1334 & 688,562 & 1341 \\
\midrule
Ours   & 29,557  & 1348 & 82,050 & \textbf{1331} & 280,929 & \textbf{1336} \\
\bottomrule
\end{tabular}%
}

\vfill
\vspace{1em}
\captionof{table}{CD ($\times 10^{-6}$) with/without $\mathbf{A}^*$ on Thingi10K;
$\Delta=\mathrm{CD}_{\mathrm{w/o}}-\mathrm{CD}_{\mathrm{w/}}$ (positive is better).}
\label{tab:ablation_a}

\begin{minipage}[b]{0.65\linewidth}
\centering
\setlength{\tabcolsep}{2.6pt}
\renewcommand{\arraystretch}{0.95}

\resizebox{\linewidth}{!}{%
\begin{tabular}{lccc}
\toprule
Method & w/o~$\mathbf{A}^*$ & w/~$\mathbf{A}^*$ & $\Delta \uparrow$ \\
\midrule
HashGrid & 1763 & 1993 & $-230$ \\
Ours     & 1856 & 1722 & \textbf{134} \\
\bottomrule
\end{tabular}%
}
\end{minipage}

\end{minipage}
\hfill
\begin{minipage}[t]{0.49\textwidth}
\vspace{0pt}
\centering

\includegraphics[
  width=0.9\linewidth,
  trim=0mm 2mm 0mm 2mm, clip
]{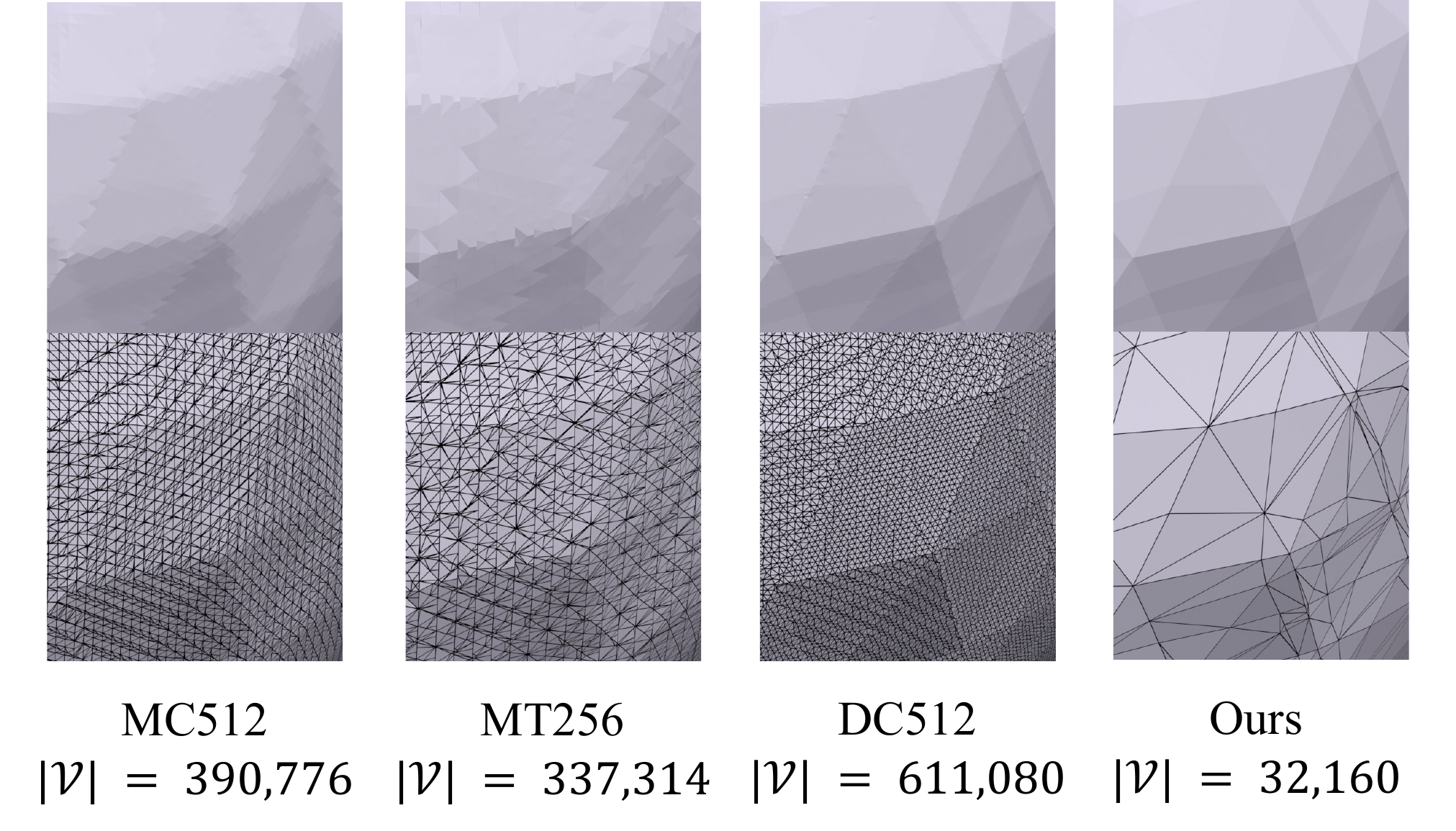}
\captionof{figure}{Qualitative comparison under the \emph{Small} setting. Sampling-based methods often over-fragment triangles with surface artifacts, while our method avoids both by construction.}
\label{fig:mcmtdc}

\vfill
\vspace{1em}

\begin{minipage}[b]{\linewidth}
\centering
\includegraphics[
  width=0.9\linewidth,
  trim=0mm 2mm 0mm 2mm, clip
]{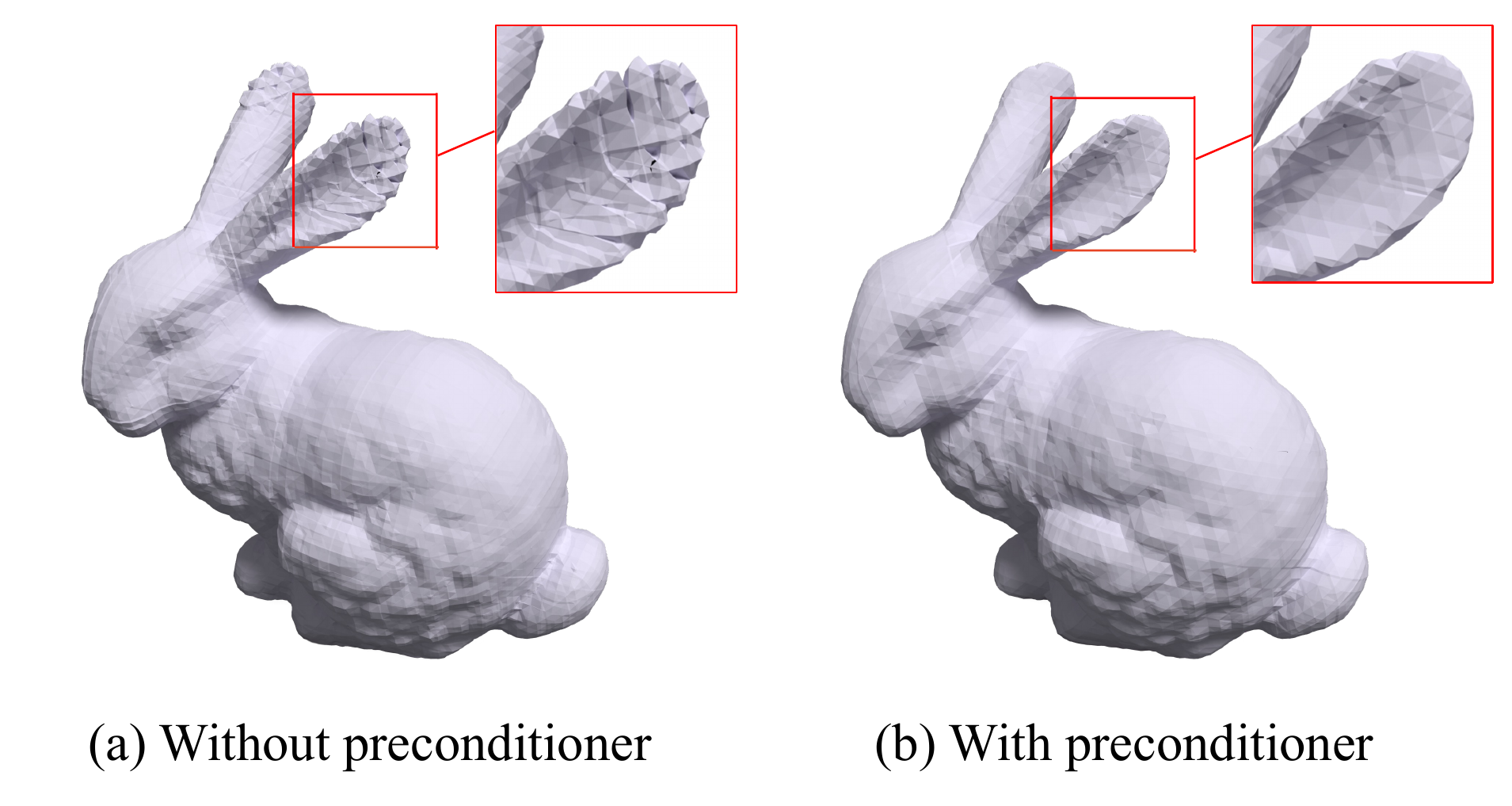}
\captionof{figure}{Qualitative effect of the input preconditioner $\mathbf{A}^*$ on the Stanford Bunny. The preconditioner improves the network’s SDF accuracy, especially in high-curvature regions (\eg, the ears).}
\label{fig:bunny_wA}
\end{minipage}

\end{minipage}

\end{figure*}
\subsection{Self-consistency with the Network’s Isosurface}
We evaluate self-consistency in \cref{tab:baselines_consistency} using SSDF ($\times 10^{-6}$), VSDF ($\times 10^{-6}$), and AD (\textdegree).
Analytic baselines are measured on their own trained networks with their corresponding extractors, whereas MC256, MC512, and MC1024 are applied to our trained network.
Both AM and our method achieve both SSDF and VSDF on the order of $10^{-8}$, and AD rounds to 0.00.
These values are near machine precision, yielding $\sim10^{3}\times$ stronger self-consistency than MC1024.
By contrast, TropicalNeRF shows lower self-consistency, consistent with the limits of its geometric heuristics for handling trilinear interpolation.
The small discrepancy between AM and our method stems from running our encoder in single precision and reflects machine-precision effects, including tie-breaking at tetrahedral grid boundaries and barycentric weight evaluation.


A broader comparison with widely used sampling-based extractors is given in \cref{tab:mcmtdc} and \cref{fig:mcmtdc}.
\cref{tab:mcmtdc} reports CD (\(\times 10^{-6}\)) for MC, MT, and DC across multiple grid resolutions and across our network’s different resolution settings with all meshes extracted from our trained network.
For each method, we compare against a high-resolution pseudo ground-truth—MC1024 for MC, MT512 for MT, and DC1024 for DC—chosen to closely approximate the network’s isosurface.
In \cref{tab:mcmtdc}, our method achieves stronger consistency with fewer vertices across extractor resolutions and network settings, except for DC512 in the \emph{Small} setting.
This exception is consistent with DC’s use of network normals, whereas MC and MT use only SDF values.
Still, DC’s slight edge comes at the cost of roughly \(30\times\) more vertices than ours.
As shown in \cref{fig:mcmtdc}, sampling-based methods often need a large number of triangles to improve self-consistency, while discrete sampling itself introduces staircase artifacts.
In contrast, ours achieves better self-consistency at a fixed, low triangle budget with $10\times$--$20\times$ fewer vertices, and avoids over-fragmentation and staircase artifacts.
For runtime and memory, and for additional comparisons with adaptive sampling-based extractors and AM applied to larger ReLU MLPs, see Appendix~\ref{appendix:add_results}.

\section{Conclusion}
We introduced \textit{TetraSDF}, an analytic isosurface extraction framework that preserves the CPWA structure of learned SDFs through a multi-resolution tetrahedral positional encoder with barycentric interpolation. This encoder generates an explicitly indexed polyhedral complex, enabling analytic extraction via region indicators, barycentric masks, and a grid-aware edge subdivision scheme that jointly tracks polyhedral cells and ReLU linear regions.
We further introduced a closed-form input preconditioner derived from the encoder-induced metric to reduce directional bias and improve training stability.
Across multiple benchmarks, we match or surpass grid-based encoder baselines in SDF accuracy while producing highly self-consistent meshes faithful to the learned isosurfaces, enabled by an efficient, GPU-parallel tensorized formulation.

Future work includes integrating TetraSDF into end-to-end neural reconstruction pipelines. In such settings, our field-faithful isosurface extraction could provide a consistent mesh-based supervision signal via field–mesh agreement under compact triangle budgets, without resolution-driven over-fragmentation.

\section{Acknowledgements}
The NAVER Smart Machine Learning (NSML) platform~\cite{kim2018nsml} was used for the experiments.
This work was supported by NAVER (Neural 3D Representation and Asset Generation for Web Rendering) and the National Research Foundation of Korea (RS-2026-25497603: Fundamentals of Online 4D Reconstruction with Semantic Viewpoint Control and Generative Priors for Proactive Content Consumption).

%
%
\bibliographystyle{splncs04}
\bibliography{main} 
\newpage

\appendix

\makeatletter
\renewcommand*{\theHsection}{A.\arabic{section}}
\renewcommand*{\theHsubsection}{A.\arabic{section}.\arabic{subsection}}
\renewcommand*{\theHfigure}{A.\arabic{figure}}
\renewcommand*{\theHtable}{A.\arabic{table}}
\makeatother

\renewcommand\thesection{A.\arabic{section}}
\renewcommand\thesubsection{A.\arabic{section}.\arabic{subsection}}
\renewcommand\thefigure{A\arabic{figure}}
\renewcommand\thetable{A\arabic{table}}
\onecolumn

{\centering
    \Large
    \textbf{Supplementary Material}\\
    \vspace{1.0em}
}

\setcounter{figure}{0}
\setcounter{table}{0}

{
\setlength{\parindent}{0pt}

\section{Theoretical Proofs}
\label{appx:theory}
In this section, we prove that barycentric interpolation within a tetrahedron is an affine transformation of the input, which we use to show that tetrahedral networks are piecewise affine in~\cref{sec:tetra-networks}. This section reproduces standard properties of barycentric interpolation and the resulting affine mapping in our notation for completeness.
\begin{theorem}[Affine property of barycentric interpolation within a tetrahedron]
\label{prop:affine-barycentric}
Let $\mathbf{g}: \mathbb{R}^3 \to \mathbb{R}^d$ be a function that encodes a point $\mathbf{x}$ into a feature vector $\mathbf{g}(\mathbf{x})$.
For any point $\mathbf{x} \in \mathbb{R}^3$, $\mathbf{g}(\mathbf{x})$ is obtained via barycentric interpolation using the feature vectors that represent the four vertices of a tetrahedron:
\[
\mathbf{g}(\mathbf{v}_0),~\mathbf{g}(\mathbf{v}_1),~\mathbf{g}(\mathbf{v}_2),~\text{and}~\mathbf{g}(\mathbf{v}_3).
\]

Then, $\mathbf{g}(\mathbf{x})$ is an affine transformation of the input $\mathbf{x}$, which can be expressed as:
\begin{equation}\label{eq:affine}
    \mathbf{g}(\mathbf{x}) = \mathbf{A} \mathbf{x} + \mathbf{b}, \quad \mathbf{A} \in \mathbb{R}^{d \times 3}, \quad \mathbf{b} \in \mathbb{R}^{d},
\end{equation}
where $\mathbf{A}$ is a linear transformation matrix, and $\mathbf{b}$ is a translation vector.
\end{theorem}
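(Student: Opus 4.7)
The plan is to prove the result in two steps: first, show that the barycentric coordinates $w_0(\mathbf{x}),\ldots,w_3(\mathbf{x})$ of $\mathbf{x}$ with respect to the tetrahedron are themselves affine functions of $\mathbf{x}$; second, observe that $\mathbf{g}(\mathbf{x})$ is then a fixed linear combination of these scalar affine functions with the constant vectors $\mathbf{g}(\mathbf{v}_i)$ as coefficients, and hence is affine in $\mathbf{x}$.

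For the first step, I would start from the defining conditions $\mathbf{x}=\sum_{i=0}^{3} w_i(\mathbf{x})\,\mathbf{v}_i$ and $\sum_{i=0}^{3} w_i(\mathbf{x})=1$. Using the sum-to-one identity to eliminate $w_0$, this becomes $\mathbf{x}-\mathbf{v}_0 = \sum_{i=1}^{3} w_i(\mathbf{x})\,(\mathbf{v}_i-\mathbf{v}_0) = M\,\mathbf{w}'$, where $M \in \mathbb{R}^{3\times 3}$ has columns $\mathbf{v}_1-\mathbf{v}_0,\mathbf{v}_2-\mathbf{v}_0,\mathbf{v}_3-\mathbf{v}_0$ and $\mathbf{w}' = (w_1,w_2,w_3)^\top$. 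Since the four vertices of a tetrahedron are affinely independent, $M$ is invertible, so $\mathbf{w}' = M^{-1}(\mathbf{x}-\mathbf{v}_0)$ is manifestly affine in $\mathbf{x}$, and $w_0 = 1-w_1-w_2-w_3$ is affine as well. I would package this as $w_i(\mathbf{x}) = \mathbf{c}_i^\top \mathbf{x} + d_i$ for suitable $\mathbf{c}_i \in \mathbb{R}^3$ and $d_i \in \mathbb{R}$.

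For the second step, I would substitute this representation into the interpolation formula $\mathbf{g}(\mathbf{x}) = \sum_{i=0}^{3} w_i(\mathbf{x})\,\mathbf{g}(\mathbf{v}_i)$ and regroup the $\mathbf{x}$-dependent and constant contributions. This gives $\mathbf{g}(\mathbf{x}) = \mathbf{A}\mathbf{x} + \mathbf{b}$ with $\mathbf{A} = \sum_{i=0}^{3} \mathbf{g}(\mathbf{v}_i)\,\mathbf{c}_i^\top \in \mathbb{R}^{d\times 3}$ and $\mathbf{b} = \sum_{i=0}^{3} d_i\,\mathbf{g}(\mathbf{v}_i) \in \mathbb{R}^d$, which is exactly the form claimed in \eqref{eq:affine}.

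There is no serious obstacle here beyond bookkeeping; the only nontrivial point is invoking affine independence of the vertices to guarantee $M$ is invertible, which is implicit in calling $\mathbf{v}_0,\ldots,\mathbf{v}_3$ the vertices of a tetrahedron. I would close by noting the two corollaries immediately needed by the Remark on the piecewise-affine property of tetrahedral networks: at each resolution level the encoder is affine on the containing tetrahedron by the above argument, and since the concatenation of affine maps is affine, $\tau$ is affine on every polyhedral cell $\mathcal{C}_{\mathbf{x}}$; composing with the piecewise-affine ReLU network $\nu^{(M)}$ then preserves the CPWA structure on the encoder-induced complex.
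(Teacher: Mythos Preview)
Your proposal is correct and follows essentially the same argument as the paper: eliminate $w_0$ via the sum-to-one constraint, invert the edge matrix $M=[\mathbf{v}_1-\mathbf{v}_0,\mathbf{v}_2-\mathbf{v}_0,\mathbf{v}_3-\mathbf{v}_0]$ (the paper calls it $\mathbf{C}$) to express the barycentric weights affinely in $\mathbf{x}$, and substitute into the interpolation to read off $\mathbf{A}$ and $\mathbf{b}$. Your closing remark on levelwise concatenation also matches the paper's Lemma~\ref{lem:levelwise-affine}.
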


\begin{proof}
Let $\boldsymbol{w} = (w_0, w_1, w_2, w_3)$ be the barycentric coordinates of a point $\mathbf{x}$ for the tetrahedron defined by its coordinates of vertices $\mathbf{v}_0, \mathbf{v}_1, \mathbf{v}_2, \mathbf{v}_3 \in \mathbb{R}^3$. Each barycentric coefficient $w_i \in \mathbb{R}$ satisfies the constraint: 
$w_0 + w_1 + w_2 + w_3 = 1.$

First, we express the function $\mathbf{g}$ in terms of the barycentric interpolation:
\begin{equation}
    \mathbf{g}(\mathbf{x}) = w_0 \mathbf{g}(\mathbf{v}_0) + w_1 \mathbf{g}(\mathbf{v}_1) + w_2 \mathbf{g}(\mathbf{v}_2) + w_3 \mathbf{g}(\mathbf{v}_3).
\end{equation}
Using the barycentric constraint:
\begin{equation}
    w_0 = 1 - (w_1 + w_2 + w_3),
\end{equation}
we substitute this into the equation:
\begin{align}
    \mathbf{g}(\mathbf{x}) 
    &= (1 - w_1 - w_2 - w_3) \mathbf{g}(\mathbf{v}_0) 
    + w_1 \mathbf{g}(\mathbf{v}_1) 
    + w_2 \mathbf{g}(\mathbf{v}_2) 
    + w_3 \mathbf{g}(\mathbf{v}_3).
\end{align}

Rearranging the equation, we obtain:
\begin{equation}
    \mathbf{g}(\mathbf{x}) = \mathbf{g}(\mathbf{v}_0) +  
    (\mathbf{g}(\mathbf{v}_1) - \mathbf{g}(\mathbf{v}_0))w_1 
    +  (\mathbf{g}(\mathbf{v}_2) - \mathbf{g}(\mathbf{v}_0)) w_2 
    + (\mathbf{g}(\mathbf{v}_3) - \mathbf{g}(\mathbf{v}_0))w_3.
\end{equation}
Defining the matrix $\mathbf{G}$ as:
\begin{equation}
    \mathbf{G} =
    \begin{bmatrix}
        \mathbf{g}(\mathbf{v}_1) - \mathbf{g}(\mathbf{v}_0), ~ \mathbf{g}(\mathbf{v}_2) - \mathbf{g}(\mathbf{v}_0), ~ \mathbf{g}(\mathbf{v}_3) - \mathbf{g}(\mathbf{v}_0)
    \end{bmatrix} \quad \in \mathbb{R}^{d \times 3},
\end{equation}
we can write:
\begin{equation} \label{eq:genc_alpha}
    \mathbf{g}(\mathbf{x}) = \mathbf{g}(\mathbf{v}_0) + \mathbf{G} \begin{bmatrix} w_1 \\ w_2 \\ w_3 \end{bmatrix}.
\end{equation}
Similarly, the point $\mathbf{x}$ itself can be expressed in barycentric form as
\begin{equation}
    \mathbf{x} = w_0 \mathbf{v}_0 + w_1 \mathbf{v}_1 + w_2 \mathbf{v}_2 + w_3 \mathbf{v}_3,
\end{equation}
which, using the same elimination of $w_0$ and grouping terms relative to $\mathbf{v}_0$, yields
\begin{equation}
\label{affine_barycentric_eq}
    \mathbf{x} = \mathbf{v}_0 + \mathbf{C}
    \begin{bmatrix}
        w_1 \\ w_2 \\ w_3
    \end{bmatrix},
\end{equation}
where $\mathbf{C} = [\mathbf{v}_1 - \mathbf{v}_0,\, \mathbf{v}_2 - \mathbf{v}_0,\, \mathbf{v}_3 - \mathbf{v}_0] \in \mathbb{R}^{3 \times 3}$.

If the vectors \( \{ \mathbf{v}_1 - \mathbf{v}_0, \mathbf{v}_2 - \mathbf{v}_0, \mathbf{v}_3 - \mathbf{v}_0 \} \) are linearly independent, \ie, the four vertices $ \mathbf{v}_0, \mathbf{v}_1, \mathbf{v}_2,$ and  $\mathbf{v}_3$ do not lie on the same plane and no three vertices are collinear, then the matrix \( \mathbf{C} \) is invertible.

Thus, the barycentric coordinates of \( \mathbf{x} \) can be computed as:
\begin{equation}\label{barycentric_co}
    \begin{bmatrix} w_1 \\ w_2 \\ w_3 \end{bmatrix} = \mathbf{C}^{-1} (\mathbf{x} - \mathbf{v}_0).
\end{equation}
Substituting \cref{barycentric_co} into \cref{eq:genc_alpha}:
\begin{equation}
    \mathbf{g}(\mathbf{x}) = \mathbf{g}(\mathbf{v}_0) + \mathbf{G} \mathbf{C}^{-1} (\mathbf{x} - \mathbf{v}_0).
\end{equation}
Rewriting this:
\begin{equation}
\label{eq:barycentric_interp_feature}
    \mathbf{g}(\mathbf{x}) = \mathbf{G} \mathbf{C}^{-1} \mathbf{x} + (\mathbf{g}(\mathbf{v}_0) - \mathbf{G} \mathbf{C}^{-1} \mathbf{v}_0).
\end{equation}
Thus, $\mathbf{g}(\mathbf{x})$ is an affine transformation with:
\begin{equation}
\label{eq:Ab_for_affine}
    \mathbf{A} = \mathbf{G} \mathbf{C}^{-1}, \quad \mathbf{b} = \mathbf{g}(\mathbf{v}_0) - \mathbf{G} \mathbf{C}^{-1} \mathbf{v}_0.
\end{equation}
with corresponding $\mathbf{A}$ and $\mathbf{b}$ in \cref{eq:affine}, which completes the proof.
\end{proof}

\begin{lemma}[Affine property of levelwise feature concatenation]
\label{lem:levelwise-affine}
In a region where the feature vector selection is fixed, the levelwise concatenation of the barycentrically interpolated feature vector is also an affine transformation of the input $\mathbf{x}$.
\end{lemma}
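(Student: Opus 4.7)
The plan is to reduce the lemma to a direct application of Theorem~\ref{prop:affine-barycentric} followed by the elementary observation that vertical stacking preserves affinity. First, I would fix an open region $\mathcal{R}$ on which the feature vector selection is constant; by construction this means that for every resolution level $\ell$, the containing tetrahedron $\mathcal{T}^{(\ell)}(\mathbf{x})$ does not change as $\mathbf{x}$ varies in $\mathcal{R}$, and the four hashed features $H^{(\ell)}_{h(v_0)},\dots,H^{(\ell)}_{h(v_3)}$ associated with its vertices are therefore fixed throughout $\mathcal{R}$. The natural choice for such a region is the polyhedral cell $\mathcal{C}_{\mathbf{x}} = \bigcap_{\ell} \mathcal{T}^{(\ell)}(\mathbf{x})$, since its defining property is precisely levelwise simultaneous containment.

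Next, I would invoke Theorem~\ref{prop:affine-barycentric} independently at each level. Setting $\mathbf{g}(\cdot)$ to the per-level feature map defined by the fixed vertex features of $\mathcal{T}^{(\ell)}$, the theorem gives constants $\mathbf{A}^{(\ell)} \in \mathbb{R}^{d\times 3}$ and $\mathbf{b}^{(\ell)} \in \mathbb{R}^{d}$ (explicitly $\mathbf{A}^{(\ell)} = \mathbf{G}^{(\ell)} (\mathbf{C}^{(\ell)})^{-1}$ and $\mathbf{b}^{(\ell)} = H^{(\ell)}_{h(v_0)} - \mathbf{A}^{(\ell)}\mathbf{v}_0$) such that
\begin{equation}
\sum_{v \in V(\mathcal{T}^{(\ell)}(\mathbf{x}))} w^{(\ell)}_v(\mathbf{x})\, H^{(\ell)}_{h(v)} = \mathbf{A}^{(\ell)} \mathbf{x} + \mathbf{b}^{(\ell)}
\end{equation}
for all $\mathbf{x} \in \mathcal{R}$. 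Invertibility of $\mathbf{C}^{(\ell)}$ is guaranteed because each tetrahedron in the subdivision of \cref{fig:tetsubdiv_mtd} is nondegenerate.

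Finally, the concatenation over levels is just block stacking: defining
\begin{equation}
\mathbf{A} := \begin{bmatrix} \mathbf{A}^{(0)} \\ \vdots \\ \mathbf{A}^{(L-1)} \end{bmatrix} \in \mathbb{R}^{dL \times 3}, \qquad \mathbf{b} := \begin{bmatrix} \mathbf{b}^{(0)} \\ \vdots \\ \mathbf{b}^{(L-1)} \end{bmatrix} \in \mathbb{R}^{dL},
\end{equation}
we immediately obtain $\tau(\mathbf{x}) = \mathbf{A}\mathbf{x} + \mathbf{b}$ on $\mathcal{R}$, which is the desired affine form.

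I do not expect a real obstacle here, since the statement is essentially a corollary of Theorem~\ref{prop:affine-barycentric}. The only point requiring care is conceptual rather than technical: one must argue that the ``region of fixed feature vector selection'' is well-defined and nonempty, i.e., that it really coincides with a single polyhedral cell $\mathcal{C}_{\mathbf{x}}$ on whose interior the per-level vertex assignments do not permute. This is handled by the tetrahedral subdivision convention, which deterministically labels the four corners of every tetrahedron, together with the tie-breaking rule implicit in the interpolation on boundaries mentioned in the definition of $\tau$.
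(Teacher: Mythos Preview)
Your proposal is correct and follows essentially the same approach as the paper: invoke Theorem~\ref{prop:affine-barycentric} at each level to obtain $\mathbf{A}^{(\ell)}\mathbf{x}+\mathbf{b}^{(\ell)}$, then vertically stack these affine maps to conclude that the concatenation is affine. The paper's proof is nearly identical in structure, differing only in that it omits your remarks on the invertibility of $\mathbf{C}^{(\ell)}$ and the well-definedness of the fixed-selection region.
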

\begin{proof}
For each resolution level $\ell \in \{0, \dots, L-1\}$, let
\begin{equation}
\mathbf{g}^{(\ell)}(\mathbf{x})
=
\mathbf{A}^{(\ell)} \mathbf{x} + \mathbf{b}^{(\ell)} \in \mathbb{R}^d
\end{equation}
denote the barycentric feature interpolation within the tetrahedron containing $\mathbf{x}$ at level $\ell$ by ~\cref{eq:barycentric_interp_feature}.
Define the concatenated feature vector $\mathbf{z}(\mathbf{x})$ as:
\begin{equation}
\mathbf{z}(\mathbf{x})
=\bigoplus_{\ell=0}^{L-1}\mathbf{g}^{(\ell)}(\mathbf{x}) = 
\big[\mathbf{g}^{(0)}(\mathbf{x})^\top,~\dots,~\mathbf{g}^{(L-1)}(\mathbf{x})^\top\big]^\top \in \mathbb{R}^{dL},
\end{equation}
where $\oplus$ denotes concatenation over all resolution levels.
In a region where the feature selection is fixed across all levels, we have
\begin{equation}
\mathbf{z}(\mathbf{x})
=
\begin{bmatrix}
\mathbf{A}^{(0)}\\
\vdots\\
\mathbf{A}^{(L-1)}
\end{bmatrix}
\mathbf{x}
+
\begin{bmatrix}
\mathbf{b}^{(0)}\\
\vdots\\
\mathbf{b}^{(L-1)}
\end{bmatrix}
= \mathbf{A}' \mathbf{x} + \mathbf{b}'
\end{equation}
for suitable $\mathbf{A}'$ and $\mathbf{b}'$.
Thus, $\mathbf{z}(\mathbf{x})$ is an affine transformation of the input $\mathbf{x}$, which completes the proof.
\end{proof}

\section{Input Preconditioning}
\label{supp:global_preconditioner}
Our global input preconditioner mitigates the geometric anisotropy introduced by the tetrahedral encoder by
mapping $\mathbf{x}$ to $\mathbf{A}^*\mathbf{x}$.
Since the learnable features are interpolated using barycentric coefficients that are derived from an affine function of the input $\mathbf{x}$ inside each tetrahedron from ~\cref{barycentric_co}, the mapping from input to barycentric coordinates should not favor specific directions.
Although the encoder is multi-resolution, each resolution level is a scaled and translated copy of the same tetrahedral tiling, and the resulting features from each level are only concatenated (\textit{ref.} \cref{lem:levelwise-affine}).
Therefore, it suffices to analyze the encoder-induced metric on a single canonical unit cube and its six tetrahedra; the same preconditioner applies globally to all cells and levels.
\paragraph{Encoder-induced local metric}
Consider a tetrahedron $\mathcal{T}$ from the subdivision of the cube
shown in Figure~\ref{fig:tetsubdiv_mtd}; for convenience, we treat it as a unit cube with ordered vertices $(\mathbf{v}_0,\mathbf{v}_1,\mathbf{v}_2,\mathbf{v}_3)$.
Define
\begin{equation}
\mathbf{C}_\mathcal{T}
=
\big[\,\mathbf{v}_1-\mathbf{v}_0\;\; \mathbf{v}_2-\mathbf{v}_0\;\; \mathbf{v}_3-\mathbf{v}_0\,\big]
\in\mathbb{R}^{3\times3},
\quad
\mathbf{w}=[w_1,w_2,w_3]^\top.
\end{equation}
Inside $\mathcal{T}$, ~\cref{barycentric_co} yields $\mathbf{x}=\mathbf{v}_0+\mathbf{C}_\mathcal{T}\mathbf{w},$ and differentiation gives $d\mathbf{w}=\mathbf{C}_\mathcal{T}^{-1}d\mathbf{x}$.
Measuring the squared norm of $d\mathbf{w}$ gives:
\begin{equation}
\label{eq:local_metric_MT}
\|d\mathbf{w}\|^2
= \big\|\mathbf{C}_\mathcal{T}^{-1} d\mathbf{x}\big\|^2
= d\mathbf{x}^\top \mathbf{C}_\mathcal{T}^{-\top}\mathbf{C}_\mathcal{T}^{-1} d\mathbf{x},
\quad
\mathbf{M}_\mathcal{T} := \mathbf{C}_\mathcal{T}^{-\top}\mathbf{C}_\mathcal{T}^{-1}.
\end{equation}
This encoder-induced local metric quantifies how sensitively barycentric coordinates respond to infinitesimal displacements in $\mathbf{x}$.
Our tetrahedral encoder is obtained by regularly tiling space with scaled and translated copies of a canonical unit cube subdivided into six congruent tetrahedra (see~\cref{fig:tetsubdiv_mtd}).
Let $\mathcal{T} \in \{S_0,\dots,S_5\}$ denote these tetrahedra and let $\mathbf{M}_\mathcal{T}$ be their local metrics
from~\cref{eq:local_metric_MT}.
Their simple average represents the average encoder-induced anisotropy within one canonical cube cell:
\begin{equation}
\label{eq:M_cell_avg}
\mathbf{M}
= \frac{1}{6}\sum_{\mathcal{T}\in\{S_0,\dots,S_5\}} \mathbf{M}_\mathcal{T}
=
\begin{bmatrix}
2 & -1 & 0\\[-2pt]
-1 & 2 & -1\\[-2pt]
0 & -1 & 1
\end{bmatrix}.
\end{equation}
However, the particular choice of cube diagonal and axis ordering in Figure~\ref{fig:tetsubdiv_mtd} is arbitrary: permuting $(x,y,z)$ yields an equivalent tetrahedral tiling with the same encoder structure.
To avoid baking this convention into our preconditioner, we construct an
axis-permutation invariant version of $\mathbf{M}$ by averaging over all
permutations of the coordinate axes.
Let $\mathbf{P}^{(0)},\dots,\mathbf{P}^{(5)}$ be the six $3\times 3$
permutation matrices
\[
\mathbf{P}^{(0)} =
\begin{bmatrix}
1&0&0\\[2pt]0&1&0\\[2pt]0&0&1
\end{bmatrix},\quad
\mathbf{P}^{(1)} =
\begin{bmatrix}
0&1&0\\[2pt]1&0&0\\[2pt]0&0&1
\end{bmatrix},\quad
\mathbf{P}^{(2)} =
\begin{bmatrix}
1&0&0\\[2pt]0&0&1\\[2pt]0&1&0
\end{bmatrix},
\]
\[
\mathbf{P}^{(3)} =
\begin{bmatrix}
0&0&1\\[2pt]1&0&0\\[2pt]0&1&0
\end{bmatrix},\quad
\mathbf{P}^{(4)} =
\begin{bmatrix}
0&1&0\\[2pt]0&0&1\\[2pt]1&0&0
\end{bmatrix},\quad
\mathbf{P}^{(5)} =
\begin{bmatrix}
0&0&1\\[2pt]0&1&0\\[2pt]1&0&0
\end{bmatrix}.
\]
We then define:
\begin{equation}
\label{eq:cell_permutation}
\mathbf{M}_{\mathrm{sym}}
= \frac{1}{6}\sum_{k=0}^5 \mathbf{P}^{(k)} \mathbf{M} \mathbf{P}^{(k)\top}
= \frac{1}{3}
\begin{bmatrix}
5 & -2 & -2\\
-2 & 5 & -2\\
-2 & -2 & 5
\end{bmatrix}.
\end{equation}
This $\mathbf{M}_{\mathrm{sym}}$ is invariant under axis relabeling and serves as a
convention-free average metric for the canonical tetrahedral tiling.
It has eigenvalues $\big(\tfrac{1}{3}, \tfrac{7}{3}, \tfrac{7}{3}\big)$, with $\mathbf{u}_\parallel = \tfrac{1}{\sqrt{3}}(1,1,1)^\top$ as eigenvector for $\tfrac{1}{3}$, and the eigenspace for $\tfrac{7}{3}$ given by
$\{\mathbf{u}\in\mathbb{R}^3 \mid \mathbf{u}^\top \mathbf{u}_\parallel = 0\}$.
Hence the spectral condition number,
defined as the ratio of the largest to the smallest eigenvalue,
is
\begin{equation}
\kappa(\mathbf{M}_{\mathrm{sym}})
= \frac{\lambda_{\max}}{\lambda_{\min}}
= 7.
\end{equation}
Thus, with respect to this average metric, the encoder-induced geometry attenuates motion
along the body diagonal and amplifies any transverse direction by a factor $\sqrt{7}$
in magnitude, on average over canonical cube cells.

\paragraph{Global linear preconditioner}
We start by expressing the global preconditioner as an affine map in homogeneous coordinates:
\begin{equation}
\tilde{\mathbf{y}} = \mathbf{T}\tilde{\mathbf{x}},
\qquad
\mathbf{T} =
\begin{bmatrix}
\mathbf{A}^* & \mathbf{t}\\
\mathbf{0}^\top & 1
\end{bmatrix},
\qquad
\tilde{\mathbf{x}} =
\begin{bmatrix}
\mathbf{x} \\[2pt] 1
\end{bmatrix},
\quad
\tilde{\mathbf{y}} =
\begin{bmatrix}
\mathbf{y} \\[2pt] 1
\end{bmatrix}.
\end{equation}
where $\mathbf{A}^* \in \mathbb{R}^{3\times3}$ is invertible.
Then, inside a tetrahedron $\mathcal{T}$, we have
$\mathbf{w}(\mathbf{y}) = \mathbf{C}_\mathcal{T}^{-1}(\mathbf{y}-\mathbf{v}_0)$
with $\mathbf{y} = \mathbf{A}^*\mathbf{x} + \mathbf{t}$, and hence, as a function of $\mathbf{x}$,
\begin{equation}
\mathbf{w}(\mathbf{x})
= \mathbf{C}_\mathcal{T}^{-1}(\mathbf{A}^*\mathbf{x} + \mathbf{t} - \mathbf{v}_0).
\end{equation}
Differentiating with respect to $\mathbf{x}$ gives:
\begin{equation}
\label{eq:Meff_local}
d\mathbf{w}
= \mathbf{C}_\mathcal{T}^{-1}\mathbf{A}^*\,d\mathbf{x},
\quad
\|d\mathbf{w}\|^2
= d\mathbf{x}^\top \mathbf{A}^{*\top}\mathbf{M}_\mathcal{T}\mathbf{A}^*\,d\mathbf{x},
\end{equation}
where $\mathbf{M}_\mathcal{T}$ is the original local metric from ~\cref{eq:local_metric_MT}.
Here the translation $\mathbf{t}$ cancels in the differential and does not affect the metric.
Thus the linear map $\mathbf{A}^*$ genuinely changes the encoder-induced geometry in the input space. We now average these preconditioned local metrics over the canonical cube and over all axis permutations.
Using the permutation matrices $\mathbf{P}^{(k)}$ and the definition of
$\mathbf{M}_{\mathrm{sym}}$ from ~\cref{eq:M_cell_avg} and~\cref{eq:cell_permutation}, we obtain 
\begin{equation}
\label{eq:Msym_eff_double}
\frac{1}{36}
\sum_{k=0}^5\
\sum_{\mathcal{T}\in\{S_0,\dots,S_5\}}
\mathbf{A}^{*\top}\mathbf{P}^{(k)}\mathbf{M}_\mathcal{T}\mathbf{P}^{(k)\top}\mathbf{A}^*
= \mathbf{A}^{*\top}\mathbf{M}_{\mathrm{sym}}\mathbf{A}^*.
\end{equation}
Thus, the effective average metric seen from the input space is
$\mathbf{A}^{*\top}\mathbf{M}_{\mathrm{sym}}\mathbf{A}^*$.
We choose $\mathbf{A}^*$ to \textit{isotropize} this average and enforce volume preservation to avoid global rescaling:
\begin{equation}
\label{eq:isotropic_condition}
\mathbf{A}^{*\top}\mathbf{M}_{\mathrm{sym}} \mathbf{A}^* = c\,\mathbf{I},
\quad
\det(\mathbf{A}^*) = 1.
\end{equation}
Under these conditions, the preconditioned average encoder-induced metric is isotropic and has unit  condition number $\kappa(\mathbf{A}^{*\top}\mathbf{M}_{\mathrm{sym}} \mathbf{A}^*) = 1$:
\begin{equation}
d\mathbf{x}^\top \mathbf{A}^{*\top}\mathbf{M}_{\mathrm{sym}} \mathbf{A}^* d\mathbf{x}
= c\,\|d\mathbf{x}\|^2.
\end{equation}
Thus, with respect to the symmetrized cube-average metric, all directions in the preconditioned input space are treated equally; on average over canonical cells, the encoder no longer imposes a preferred direction on infinitesimal displacements.

Solving ~\cref{eq:isotropic_condition} with $\mathbf{M}_{\mathrm{sym}}$ yields the closed-form preconditioner:
\begin{equation}
\label{eq:A_closed_form}
\mathbf{A}^*
=
\frac{7^{1/3}}{3}
\begin{bmatrix}
1+\tfrac{2}{\sqrt{7}} & 1-\tfrac{1}{\sqrt{7}} & 1-\tfrac{1}{\sqrt{7}}\\
1-\tfrac{1}{\sqrt{7}} & 1+\tfrac{2}{\sqrt{7}} & 1-\tfrac{1}{\sqrt{7}}\\
1-\tfrac{1}{\sqrt{7}} & 1-\tfrac{1}{\sqrt{7}} & 1+\tfrac{2}{\sqrt{7}}
\end{bmatrix}.
\end{equation}
which satisfies $\det(\mathbf{A}^*)=1$ and
$\mathbf{A}^{*\top}\mathbf{M}_{\mathrm{sym}}\mathbf{A}^* = c\,\mathbf{I}$
with $c = 7^{2/3}/3$.
Numerically, we use
\begin{equation}
\mathbf{A}^*\approx
\begin{bmatrix}
1.11965708 & 0.39663705 & 0.39663705\\
0.39663705 & 1.11965708 & 0.39663705\\
0.39663705 & 0.39663705 & 1.11965708
\end{bmatrix}.
\end{equation}

To quantify the effect of this preconditioner, \cref{tab:cond_numbers_precond}
reports spectral condition numbers $\kappa(\cdot)$ of the encoder-induced metrics on
the canonical cube before and after applying $\mathbf{A}^*$.
\begin{table}[H]
\centering
\small
\captionof{table}{
Condition numbers of encoder-induced metrics on the canonical unit cube
with the tetrahedral subdivision shown in~\cref{fig:tetsubdiv_mtd}, before and after
applying the global preconditioner $\mathbf{A}^*$.
}
\label{tab:cond_numbers_precond}
\begin{tabular}{lcc}
\toprule
Case & $\kappa(\mathbf{M})$ & $\kappa(\mathbf{A}^{*\top} \mathbf{M}\mathbf{A}^*)$ \\
\midrule
$\mathbf{M}_{\mathrm{sym}}$ & $7.00$ & $\mathbf{1.00}$ \\
$\mathbf{M}_{S_0}$ & $16.39$ & $\mathbf{5.05}$ \\
$\mathbf{M}_{S_1}$ & $16.39$ & $\mathbf{5.05}$ \\
$\mathbf{M}_{S_2}$ & $16.39$ & $\mathbf{5.05}$ \\
$\mathbf{M}_{S_3}$ & $16.39$ & $\mathbf{5.05}$ \\
$\mathbf{M}_{S_4}$ & $16.39$ & $\mathbf{5.05}$ \\
$\mathbf{M}_{S_5}$ & $16.39$ & $\mathbf{5.05}$ \\
\bottomrule
\end{tabular}
\end{table}

\section{Polyhedral Cells}
\label{supp:convex-cells}

Recall from~\cref{mtd:tet-enc} that the multi-resolution tetrahedral
encoder partitions the bounded domain $\mathcal{S} \subset \mathbb{R}^3$
into $\mathbf{x}$-containing tetrahedra
$\mathcal{T}^{(\ell)}(\mathbf{x})$ at each resolution level
$\ell \in \{0, 1, \dots, L-1\}$.
For $\mathbf{x} \in \mathcal{S}$, we define the encoder-induced polyhedral cell of $\mathbf{x}$ as:
\begin{equation}
    \mathcal{C}_{\mathbf{x}}
    :=
    \bigcap_{\ell=0}^{L-1} \mathcal{T}^{(\ell)}(\mathbf{x}).
\end{equation}

\begin{lemma}[Convexity of polyhedral cells]
For any $\mathbf{x} \in \mathcal{S}$, the polyhedral cell $\mathcal{C}_{\mathbf{x}}$
is a convex polyhedron.
\end{lemma}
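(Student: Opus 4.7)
The plan is to show that $\mathcal{C}_{\mathbf{x}}$ is convex by writing it as a finite intersection of closed half-spaces, which simultaneously establishes that it is a polyhedron.

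First, I would recall the standard characterization of a tetrahedron as a bounded convex polyhedron: for each level $\ell$, the tetrahedron $\mathcal{T}^{(\ell)}(\mathbf{x})$ is the convex hull of its four vertices, and equivalently can be expressed as the intersection of exactly four closed half-spaces, one per face. In particular, there exist vectors $\mathbf{n}^{(\ell)}_i \in \mathbb{R}^3$ and scalars $c^{(\ell)}_i \in \mathbb{R}$ for $i \in \{0,1,2,3\}$ such that
\begin{equation}
\mathcal{T}^{(\ell)}(\mathbf{x}) \;=\; \bigcap_{i=0}^{3} \bigl\{\, \mathbf{y} \in \mathbb{R}^3 \;:\; \langle \mathbf{n}^{(\ell)}_i, \mathbf{y}\rangle \le c^{(\ell)}_i \,\bigr\}.
\end{equation}
Each such half-space is convex, and intersections of convex sets are convex, so $\mathcal{T}^{(\ell)}(\mathbf{x})$ itself is convex.

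Next, I would take the intersection over all $L$ resolution levels. Substituting the half-space representation into the definition of $\mathcal{C}_{\mathbf{x}}$, I obtain
\begin{equation}
\mathcal{C}_{\mathbf{x}} \;=\; \bigcap_{\ell=0}^{L-1} \bigcap_{i=0}^{3} \bigl\{\, \mathbf{y} \in \mathbb{R}^3 \;:\; \langle \mathbf{n}^{(\ell)}_i, \mathbf{y}\rangle \le c^{(\ell)}_i \,\bigr\},
\end{equation}
which is a finite intersection of at most $4L$ closed half-spaces. By the standard definition, any such set is a (convex) polyhedron. Boundedness follows immediately since $\mathcal{C}_{\mathbf{x}} \subseteq \mathcal{T}^{(0)}(\mathbf{x})$ is contained in a bounded set, so $\mathcal{C}_{\mathbf{x}}$ is in fact a convex polytope.

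There is no substantial obstacle here: the result is essentially a one-line consequence of the fact that finite intersections of half-spaces are convex polyhedra, together with the elementary observation that a tetrahedron admits a four-half-space representation. The only point worth stating explicitly is that $L < \infty$ in~\cref{mtd:tet-enc}, so the intersection remains finite and the resulting set inherits the polyhedral structure rather than degenerating into a more general convex body.
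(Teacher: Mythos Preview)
Your proposal is correct and follows essentially the same approach as the paper: represent each tetrahedron as an intersection of four closed half-spaces, then use that finite intersections of convex sets (half-spaces) remain convex polyhedra. Your version is slightly more explicit in writing out the $4L$ half-space representation and noting boundedness, but the underlying argument is identical.
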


\begin{proof}
Each tetrahedron $\mathcal{T}^{(\ell)}(\mathbf{x})$ can be written as
the intersection of four half-spaces of the form
\begin{equation}
\mathcal{T}
=
\left\{
  \mathbf{y} \in \mathbb{R}^3
  \,\middle|\,
  \mathbf{a}_j^\top \mathbf{y} \le b_j,
  \; j \in \{0,1,2,3\},\;
  b_j \in \mathbb{R}
\right\},
\end{equation}
and half-spaces are convex by definition.
Hence each $\mathcal{T}^{(\ell)}(\mathbf{x})$ is convex.
Because the intersection of convex sets remains
convex~\cite{boyd2004convex}, the cell $\mathcal{C}_{\mathbf{x}}$ is
also convex.
\end{proof}


\begin{remark}[restated]
Notice that the encoder $\tau$ is piecewise affine and continuous over
$\mathcal{S}$.
When $L = 1$, $\tau$ is affine with respect to $\mathbf{x}$ within each
tetrahedron, while for $L > 1$, $\tau$ is affine with respect to
$\mathbf{x}$ within each cell $\mathcal{C}_{\mathbf{x}}$.
In the special case where the geometric progression ratio
\begin{equation}
    \gamma := \exp\!\left( \frac{\ln N_{\max} - \ln N_{\min}}{L-1} \right)
\end{equation}
satisfies $\gamma \in \mathbb{N}$ and $\gamma \ge 2$, every cell $\mathcal{C}_{\mathbf{x}}$ is a tetrahedron at the finest level.
\end{remark}

\section{Implementation Details of Initial Skeleton Extraction}
\label{sec:polycomplex-construction}
In the initial skeleton extraction step, our goal is to extract all vertices and edges of $\mathcal{C}$ from the multi-resolution tetrahedral grid.
This section provides the detailed algorithmic construction described in~\cref{sec:grid-skeleton}.
\subsection{Grid Representation as a Union of Parallel-plane Sets}
We exploit the inherent regularity of the multi-resolution tetrahedral grid to represent $\mathcal{C}$ using a union of parallel-plane sets. As the slopes of tetrahedral subdivision are constant across levels, their normal set is identical while their offsets differ across levels.
Thus, multiple planes can be compactly represented as sharing a single normal but varying in their offsets.
We denote these unique normals as:
\begin{align}
    \mathcal{N}=\{\vn_0, \vn_1, \dots, \vn_i, \dots \}.
\end{align}
For each $\vn_i$, the associated offsets across all resolution levels are defined as a vector as follows:
\begin{equation}
\vd^{(i)} = \big[ d^{(i)}_0, d^{(i)}_1, \dots \big]^\top.
\end{equation} 
This representation makes it easier to extract the vertices and edges of $\mathcal{C}$.  
Moreover, the formulation scales efficiently to large numbers of vertices and edges.

\subsection{Vertex Extraction}
\label{sec:input}

A vertex of $\mathcal{C}$ is the intersection of at least three planes whose normals are linearly independent. 
We define the set of such normal triplets by:
\begin{equation}
\mathcal{T}^{(ijk)}
:=
\bigl\{
\{\vn_i,\vn_j,\vn_k\}\in [\mathcal{N}]^3
\;\big|\;
\det[\vn_i\ \vn_j\ \vn_k]\neq 0
\bigr\},
\label{eq:triplet-set}
\end{equation}
where $[\mathcal{N}]^k$ denotes the set of all $k$-element subsets of $\mathcal{N}$.
For each triplet, we define the normal matrix as:
\begin{align}
\mN^{(ijk)}
:=
\begin{bmatrix}
\vn_i & \vn_j & \vn_k
\end{bmatrix}
\in \mathbb{R}^{3\times 3},
\label{eq:normal-matrix}
\end{align}
and the corresponding offset matrix is given by:
\begin{equation}
\mD^{(ijk)} \;=\;
\begin{bmatrix}
\bigl(\vd^{(i)} \otimes \mathbf1_{\,|\vd^{(j)}|} \otimes \mathbf1_{\,|\vd^{(k)}|}\bigr)^\top \\
\bigl(\mathbf 1_{\,|\vd^{(i)}|} \otimes \vd^{(j)} \otimes \mathbf 1_{\,|\vd^{(k)}|}\bigr)^\top \\
\bigl(\mathbf 1_{\,|\vd^{(i)}|} \otimes \mathbf 1_{\,|\vd^{(j)}|} \otimes \vd^{(k)}\bigr)^\top
\end{bmatrix}
\in \mathbb R^{3\times (|\vd^{(i)}||\vd^{(j)}||\vd^{(k)}|)},
\label{eq:D-kron}
\end{equation}
where $\mathbf{1}_{|\vd|}$ denotes the column vector of ones of length $|\vd|$ and $\otimes$ denotes the Kronecker product, which can be viewed as the tensor product collapsed into a vector in our case. 

Thus, the vertices for the chosen normal triplet $\mathcal{T}^{(ijk)}$ are the solution of linear equation:
\begin{align}
    \mN^{(ijk)\top} \mX^{(ijk)} = \mD^{(ijk)}
\end{align}

which can be obtained by:
\begin{equation}
\mathbf{X}^{(ijk)} = \mN^{(ijk)-\top} \mathbf{D}^{(ijk)}.
\label{eq:vertex-batch}
\end{equation}
Notice that each column of $\mathbf{X}^{(ijk)}$ corresponds to a vertex. By collecting these columns over all triplets of $\mathcal T$ yields the candidate vertex set $\mathcal X_{\mathrm{all}}$.
Since planes are unbounded, some candidate vertices may fall outside the bounded domain $\mathcal S$. We therefore define the final extracted vertex set as:
\begin{equation}
\mathcal{V} := \mathcal{X}_{\mathrm{all}} \cap \mathcal{S}.
\label{eq:final-vertices}
\end{equation}
Note that the computation can be efficiently parallelized over triplets via tensor broadcasting.

\subsection{Edge Extraction}
\label{sec:edge-recon}
An edge of $\mathcal{C}$ is the intersection of at least two non-parallel planes. Similarly, we define the set of normal pairs describing the two planes as follows:
\begin{equation}
\mathcal{P} := \{\{\vn_i,\vn_j\}\in [\mathcal{N}]^2 \mid \vn_i \times \vn_j \neq \mathbf{0}\}.
\end{equation}
For each $\{\vn_i,\vn_j\} \in \mathcal{P}$, we define
$
\mathbf{u}_{ij} := \frac{\vn_i \times \vn_j}{||\vn_i \times \vn_j||}
$
as the unit direction vector of the line of intersection between the two planes. Note that the offsets of the planes are not required for this.

Given the above normals $\vn_i$ and $\vn_j$ together with their offsets $d_n^{(i)}$ and $d_m^{(j)}$ and the previously extracted vertex set $\mathcal{V}$, the endpoints of the edges are required to satisfy:
\begin{equation}
\mathcal{V}_{nm}^{(ij)}
:=
\{\vv \in \mathcal{V} \mid 
\vn_i^\top \vv = d_n^{(i)},\
\vn_j^\top \vv = d_m^{(j)}\}.
\end{equation}
However, generating edges directly from $\mathcal{V}_{mn}^{(ij)}$ may result in overlaps. 
To avoid this, we order them by the projection of each vertex onto $\mathbf{u}_{ij}$:
\begin{equation}
t^{(ij)}_{mn}(\vv) := \mathbf{u}_{ij}^\top \vv,\quad \vv\in\mathcal{V}_{mn}^{(ij)} .
\end{equation}
This scalar projection represents the distance from the origin. The sequence obtained by sorting $\mathcal{V}_{mn}^{(ij)}$ in ascending $t_{ij}$ order is as follows:
\begin{align}
    \widetilde{\mathcal{V}}_{mn}^{(ij)} = \big[ \vv_0, \dots, \vv_\ell, \vv_{\ell+1}, \dots \big].
\end{align}
The non-overlapping local edge set is then obtained by connecting consecutive vertices in $\widetilde{\mathcal{V}}_{mn}^{(i,j)}$:
\begin{equation}
\mathcal{E}_{mn}^{(ij)}
:=
\bigl\{\{\vv_{\ell},\,\vv_{\ell+1}\} \mid \vv_\ell, \vv_{\ell+1} \in \widetilde{\mathcal{V}}_{mn}^{(i,j)}, 0 \le \ell < |\widetilde{\mathcal{V}}_{mn}^{(i,j)}|-1 \bigr\}.
\end{equation}
The whole edge set $\mathcal{E}$ is obtained by taking the union of $\mathcal{E}_{mn}^{(ij)}$ over all normal pair and offset pair combinations.
Since the vertex set $\mathcal{V}$ is already defined within the bounded domain, no additional post-processing is required to be within $\mathcal{S}$.
Once again, this computation can be parallelized efficiently using tensor broadcasting.
\clearpage
\section{Additional Results}
\label{appendix:add_results}
\subsection{Runtime}
\label{appendix:runtime}
We report runtime on the Stanford dataset across resolution settings.
\cref{tab:runtime_memory} reports the runtime for initial skeleton extraction and the end-to-end total runtime, which includes skeleton extraction. It also lists the initial-skeleton sizes $|\mathcal{V}|$ and $|\mathcal{E}|$, and the peak GPU memory.
The results show that our method extracts the large vertex and edge sets of the initial skeleton---encoder-induced polyhedral complex---efficiently while keeping the total runtime reasonable.
When the encoder’s levels and resolutions are fixed, the initial skeleton can be cached and reused. For reference, MC and DC implementations of CGAL~\cite{Fabri2009CGAL} at \(512^3\) resolution take 0.94s and 5.87s on average over the Stanford dataset, respectively.
\begin{table}[htbp]
\caption{Runtime decomposed into initial skeleton extraction and total runtime, with peak GPU memory, for each resolution setting $R$ on the Stanford dataset. When the encoder settings are fixed, the initial skeleton can be cached and reused.}
\label{tab:runtime_memory}
\centering
\begin{tabular}{lccccc}
\toprule
\multirow{2}{*}{$R$} &
\multicolumn{3}{c}{Initial skeleton extraction} &
\multirow{2}{*}{Total (s)} &
\multirow{2}{*}{\shortstack{Memory\\(GB)}} \\
\cmidrule(lr){2-4}
& $|\mathcal{V}|$ & $|\mathcal{E}|$ & Time (s) && \\
\midrule
Small  & $4.4\times 10^{5}$ & $1.6\times 10^{6}$ & 0.24 & 1.83 & 0.99 \\
Medium & $4.4\times 10^{6}$ & $1.6\times 10^{7}$ & 0.33 & 1.42 & 1.53 \\
Large  & $3.5\times 10^{7}$ & $1.3\times 10^{8}$ & 1.99 & 4.56 & 8.01 \\
\bottomrule
\end{tabular}%
\end{table}


\subsection{Runtime and Capacity Comparison with Analytic Marching}
\label{appendix:am-runtime}
We further compare Analytic Marching (AM) applied to ReLU MLPs with different numbers of hidden layers and widths.
The AM setting used in the main paper is AM-H10/W90, which denotes a ReLU MLP with 10 hidden layers of width 90.
As shown in \cref{tab:runtime_am}, the AM baseline in the main paper has runtime comparable to Ours-Large but much higher CD, and larger ReLU MLP variants do not close the gap.
Ours-Medium is both faster and more accurate than AM-H20/W200 despite having fewer parameters, suggesting that the gain comes from the tetrahedral encoder improving the learned SDF while preserving analytic isosurfacing.
Note that ours runtime includes initial skeleton extraction.
\begin{table}[htbp]
\caption{ Runtime and capacity comparison with Analytic Marching (AM) on the Stanford dataset. CD is reported in units of $10^{-6}$. Note that ours runtime includes initial skeleton extraction. } \label{tab:runtime_am}
\centering \begin{tabular}{lccc}
\toprule
Method & Params & CD$\downarrow$ & Time (s)$\downarrow$ \\
\midrule
AM-H10/W90 & 74K & 5475 & 4.21 \\
AM-H15/W150 & 318K & 4389 & 3.82 \\
AM-H20/W200 & 765K & 5062 & 5.10 \\
Ours-Medium & 588K & 2462 & 1.42 \\
Ours-Large & 4.5M & 1718 & 4.56 \\
\bottomrule
\end{tabular}
\end{table}
\subsection{Comparison with Adaptive Sampling-based Extraction}
\label{appendix:additional-runtime-extractor}

We also evaluate adaptive sampling-based extractors on our trained network (Large), including CGAL~\cite{Fabri2009CGAL} implementations of octree Marching Cubes (OctMC) and octree Dual Contouring (OctDC), and McGrids~\cite{ren2024mcgrids} with its official default settings. We use the Stanford dataset for this comparison. As shown in \cref{tab:octree_comparison}, OctMC, OctDC, and McGrids require substantially more vertices and runtime than Ours while producing worse self-consistency with the same trained network.
At depth 12, octree-based extraction became impractical on the Stanford dataset; Bunny alone produced 71M leaves, indicating that octree adaptivity does not prevent near-surface explosion on these curved shapes.
\begin{table}[htbp]
\caption{ Adaptive sampling-based extractor comparison on the same trained network. SSDF and VSDF are reported in units of $10^{-6}$.}
\label{tab:octree_comparison}
\centering
\begin{tabular}{lccccc}
\toprule
Method & SSDF$\downarrow$ & VSDF$\downarrow$ & AD($^\circ$)$\downarrow$ & $|\mathcal{V}|\downarrow$ & Time (s)$\downarrow$ \\
\midrule
OctMC & 8.85 & 3.28 & 1.92 & 6.76M & 124.2 \\
OctDC & 6.66 & 5.18 & 1.76 & 13.49M & 139.3 \\
McGrids & 332.2 & 16.5 & 3.13 & 3.73M & 69.6 \\
Ours & 0.076 & 0.078 & 0.00 & 0.28M & 4.56 \\
\bottomrule
\end{tabular}
\end{table}

\newpage
\subsection{Qualitative Comparison across Resolution Settings}
\cref{fig:ablation_res} shows the qualitative effect of the three
resolution settings (\emph{Small}, \emph{Medium}, \emph{Large}) on the
Stanford Dragon, complementing the quantitative results in
\cref{tab:different_resolution}.
As the resolution decreases, TropicalNeRF loses fine details much more rapidly than our method. Our method still produces visually plausible meshes even under the \emph{Small} setting.
\begin{figure}[htbp]
  \centering
  \includegraphics[width=0.8\linewidth]{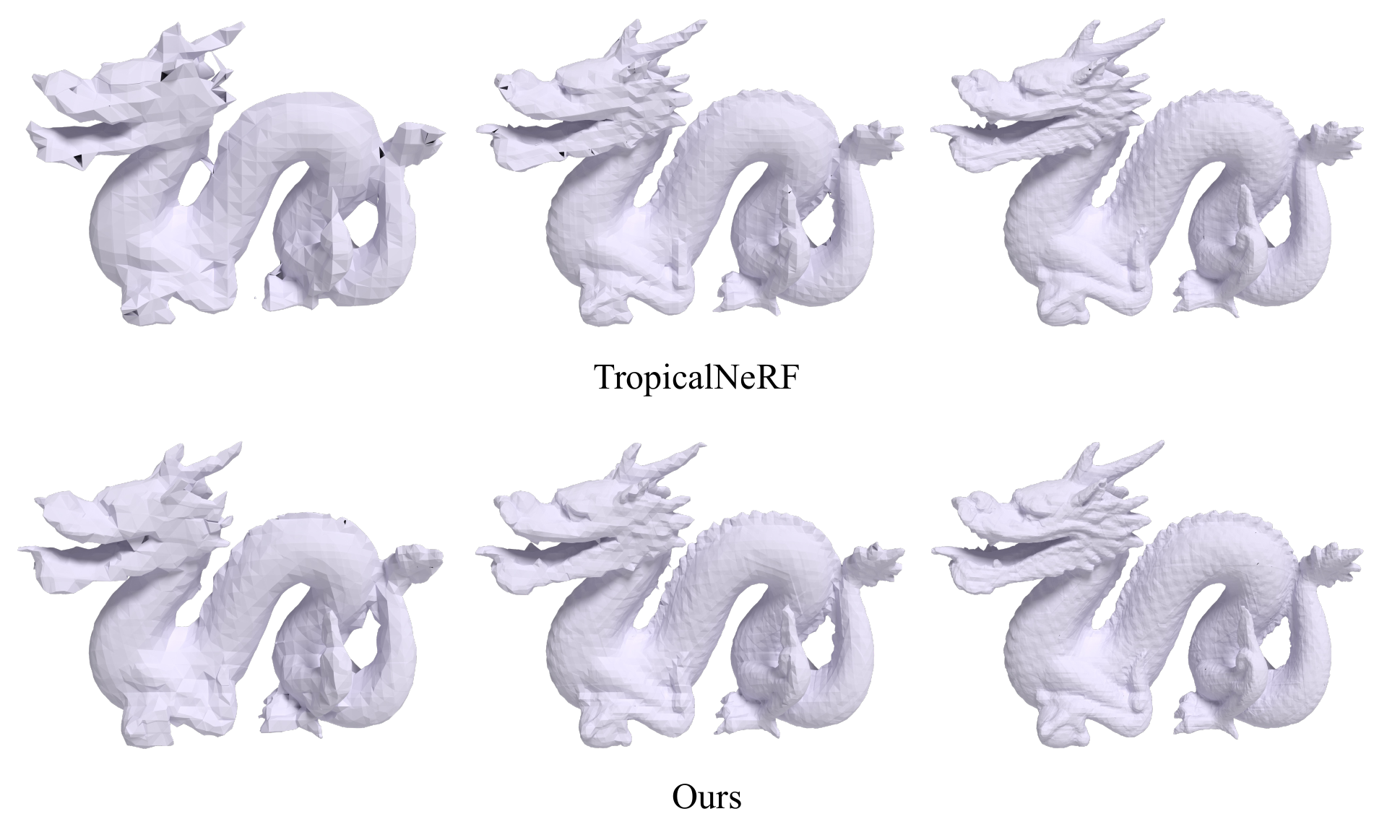}
  \caption{
    Qualitative results on the Stanford Dragon across resolution settings:
    \emph{Small} (left), \emph{Medium} (middle), and \emph{Large} (right).
  }
  \label{fig:ablation_res}
\end{figure}

\section{Further Discussions}
\label{sec:appendix_further_discussions}

\subsection{Design Choice for Our Tetrahedral Grid}
\label{sec:appendix_6tet}
Our tetrahedral grid is designed to satisfy two practical requirements for both analytic extraction and training stability:
\begin{itemize}
  \item \textbf{Efficient Analytic Traversal:} given a query point $\mathbf{x}$, we should be able to identify its containing tetrahedron and retrieve a compact neighborhood under a single canonical local pattern repeated throughout space, enabling fixed-size lookup tables with minimal case distinctions.
  \item \textbf{Uniform Vertex Valence:} since features are stored at grid vertices and accessed by interpolation, vertices should have uniform valence (\ie, the same number of incident tetrahedra) to avoid systematically uneven update frequency and gradient variance across vertices.
\end{itemize}

Our congruent 6-tet cube subdivision satisfies these requirements by design.
Moreover, the remaining directional bias induced by the grid anisotropy is addressed by our global preconditioner $\mathbf{A}^*$ (\cref{sec:precondition}), which improves conditioning while preserving the same indexing and neighbor structure.

In contrast, a 5-tet decomposition is non-congruent and often requires multiple local patterns across the grid, increasing case distinctions for point-to-tetrahedron indexing and complicating compact neighbor lookup across cell boundaries.
A 12-tet decomposition with a Steiner center vertex introduces multiple vertex types with different valence, which can lead to uneven feature-hit counts and larger traversal neighborhoods.

\subsection{Applying $\mathbf{A}^*$ to HashGrid}
\label{sec:appendix_Astar_hashgrid}
As noted in~\cref{sec:gt_accuracy,tab:ablation_a}, applying our encoder-derived $\mathbf{A}^*$ improves our model's accuracy but does not improve HashGrid's accuracy, indicating that $\mathbf{A}^*$ is not a generic preconditioner.
Here we briefly explain why a \emph{global linear} preconditioner is not expected to provide directional gains for HashGrid's within-cell trilinear interpolation.

Following~\cref{supp:global_preconditioner}, consider a canonical unit cube with the standard trilinear interpolation weights for its $8$ corner vertices.
Let $a_0(t)=1-t$ and $a_1(t)=t$ be two linear functions on $t\in[0,1]$; then the weight for corner $(i,j,k)\in\{0,1\}^3$ is
\begin{equation}
w_{ijk}(x,y,z)=a_i(x)\,a_j(y)\,a_k(z), \qquad (x,y,z)\in[0,1]^3.
\end{equation}
Let $\mathbf{x}=(x,y,z)^\top$ and let $\mathbf{w}(\mathbf{x})\in\mathbb{R}^8$ stack the eight weights $\{w_{ijk}\}$.
Differentiation gives $d\mathbf{w}=\mathbf{J}\,d\mathbf{x}$.
We define the cube-average metric $\mathbf{M}_{\mathrm{tri}}\in\mathbb{R}^{3\times 3}$ as the canonical-cube average of $\mathbf{J}^\top\mathbf{J}$.
By cube symmetries, $\mathbf{M}_{\mathrm{tri}}=\alpha\mathbf{I}$ for some $\alpha>0$.
Under the same volume-preserving isotropization condition used in~\cref{supp:global_preconditioner},
we enforce $\mathbf{A}^\top\mathbf{M}_{\mathrm{tri}}\mathbf{A}=c\mathbf{I}$ together with $\det(\mathbf{A})=1$.
Since $\mathbf{M}_{\mathrm{tri}}=\alpha\mathbf{I}$, this implies $\mathbf{A}^\top\mathbf{A}=\mathbf{I}$ and thus we can take $\mathbf{A}=\mathbf{I}$.
Therefore, a global linear preconditioner is not expected to provide directional gains for HashGrid.

\subsection{Limitations}
\label{sec:appendix_triangle_quality}
Triangle regularity\footnote{Here, triangle regularity refers to avoiding skinny triangles, \eg, larger minimum angles.} can be important for downstream geometry processing tasks such as deformation-based surface editing, but it often trades off with the exact agreement with the learned zero-level set.
Sampling-based isosurface extraction (\eg, MC, MT, and DC) approximates the zero-level set under a discrete grid, incurring discretization error but yielding meshes whose triangle regularity is largely determined by the sampling scheme.
In contrast, analytic isosurface extraction (\eg, AM, TropicalNeRF, and ours) prioritizes exact agreement with the learned zero-level set.
In~\cref{tab:tri_quality}, we report the fraction of faces with $\theta_{\min}<5^\circ$.
When required, triangle regularity can be improved via standard post-processing, at the cost of deviating from the learned zero-level set.

\begin{table}[ht]
\setlength{\abovecaptionskip}{0pt}%
\setlength{\belowcaptionskip}{0pt}%
\captionof{table}{Fraction of faces with minimum angle below $5^\circ$ for field-faithful analytic isosurface extractors on the Stanford dataset. $\theta_{\min}$ denotes the per-face minimum angle. Values are averaged over the Stanford shapes; TropicalNeRF and ours use the \emph{Large} resolution setting (\cref{sec:exp}).}
\label{tab:tri_quality}
\centering
\scriptsize
\setlength{\tabcolsep}{4pt}
\renewcommand{\arraystretch}{1.05}
\begin{tabular}{lc}
\toprule
Method & $\%\{\theta_{\min}<5^\circ\}$ \\
\midrule
AM & 10.05 \\
TropicalNeRF & 10.94 \\
Ours & 11.83 \\
\bottomrule
\end{tabular}
\end{table}
\section{Edge Subdivision Intuition}
\label{supp:edge-subdiv}
In this section, we briefly recap the edge subdivision scheme from prior work~\cite{berzins2023polyhedral} (Sec.~\ref{sec:edge-subdivision}) and provide intuition for readers who are less familiar with this topic; our grid-aware extensions are described in~\cref{sec:region-indicators} and~\cref{sec:perturbation}.

\noindent\textbf{Notation}~~
\begin{itemize}\setlength{\itemsep}{0.15em}
\item $\mathcal{V},\mathcal{E}$: current vertex and edge sets.
\item $\mathbf{x}\in\mathcal{V}$: a vertex position in $\mathbb{R}^3$.
\item $m$ indexes ReLU layers and $k$ indexes neurons within layer $m$.
\item $d_i=\nu^{(m)}_k(\mathbf{x}_i)$: pre-activation at $\mathbf{x}_i$.
\item $\mathcal{H}^{(m)}_k:=\{\mathbf{x}\mid \nu^{(m)}_k(\mathbf{x})=0\}$: decision boundary (folded hyperplane) in the input space induced by neuron $(m,k)$.
\item $\hat{\mathbf{x}}$: an intersection point created by subdividing an edge crossing $\mathcal{H}^{(m)}_k$.
\item $\gamma^{(m)}_k(\mathbf{x})\in\{-1,0,+1\}$: sign of $\nu^{(m)}_k(\mathbf{x})$ with threshold $\epsilon_s$ for numerical stability.
\item $\mathbf{s}^{(m)}_k(\mathbf{x})=[\gamma^{(1)}_1(\mathbf{x}),\ldots,\gamma^{(m)}_k(\mathbf{x})]$: sign vector of $\mathbf{x}$ up to neuron $(m,k)$.
\item $\textsc{ConnectIntersections}(\mathcal{S})$: connects intersection points on $\mathcal{H}^{(m)}_k$ into edges by tracking adjacent regions via sign-vector perturbation (Secs.~\ref{sec:edge-subdivision} and~\ref{sec:preliminary-perturbation}).
\end{itemize}
\vspace{0.4em}

\noindent\textbf{Intuition}~~
Edge subdivision incrementally constructs a 1-skeleton (vertices and edges) of the ReLU-induced polyhedral complex in the input space by sequentially processing neuron boundaries in the order of $(m,k)$.
Processing one neuron $(m,k)$ via its boundary $\mathcal{H}^{(m)}_k$, we (i) add intersection points where edges cross $\mathcal{H}^{(m)}_k$, and (ii) connect intersection points that lie on the same face of a linear region.

\begin{center}
\begin{minipage}{\linewidth}
\captionsetup{type=algorithm}
\captionof{algorithm}{Edge Subdivision for single neuron $(m,k)$}
\label{alg:edgesubdiv_single}

\vspace{-0.3em}
\hrule width\linewidth
\vspace{0.35em}

\small
\textbf{Input:} vertices $\mathcal{V}$, edges $\mathcal{E}$, neuron $(m,k)$ (pre-activation $\nu^{(m)}_k(\cdot)$), threshold $\epsilon_s$\\
\textbf{Output:} updated $(\mathcal{V},\mathcal{E})$
\vspace{0.35em}

\begin{algorithmic}[1]
\State $\mathcal{S}\gets\emptyset$ \Comment{$(\hat{\mathbf{x}},\mathbf{s}^{(m)}_k(\hat{\mathbf{x}}))$ on $\mathcal{H}^{(m)}_k$}
\State $\mathcal{E}_{\mathrm{old}}\gets \mathcal{E}$ \Comment{snapshot}
\ForAll{$(\mathbf{x}_0,\mathbf{x}_1)\in\mathcal{E}_{\mathrm{old}}$}
  \State $\gamma_0\gets \gamma^{(m)}_k(\mathbf{x}_0),\;\gamma_1\gets \gamma^{(m)}_k(\mathbf{x}_1)$
  \If{$\gamma_0\gamma_1<0$}
    \State $d_0\gets \nu^{(m)}_k(\mathbf{x}_0),\; d_1\gets \nu^{(m)}_k(\mathbf{x}_1)$
    \State $w \gets |d_0|/|d_0-d_1|$
    \State $\hat{\mathbf{x}} \gets (1-w)\mathbf{x}_0 + w\mathbf{x}_1$
    \State $\mathcal{V}\gets \mathcal{V}\cup\{\hat{\mathbf{x}}\}$
    \State $\mathcal{E}\gets (\mathcal{E}\setminus\{(\mathbf{x}_0,\mathbf{x}_1)\}) \cup \{(\mathbf{x}_0,\hat{\mathbf{x}}),(\hat{\mathbf{x}},\mathbf{x}_1)\}$
    \State $\mathcal{S}\gets \mathcal{S}\cup\{(\hat{\mathbf{x}},\mathbf{s}^{(m)}_k(\hat{\mathbf{x}}))\}$
  \EndIf
\EndFor
\State $\mathcal{E}\gets \mathcal{E}\cup \Call{ConnectIntersections}{\mathcal{S}}$
\State \Return $(\mathcal{V},\mathcal{E})$
\end{algorithmic}

\normalsize
\vspace{0.25em}
\hrule width\linewidth
\vspace{0.35em}
\end{minipage}
\end{center}

\vspace{0.3em}
\noindent\textbf{Remark}~~
In TetraSDF, region tracking must account for both ReLU MLP linear regions and encoder-induced polyhedral cells.
We therefore replace the pure sign-vector test with the augmented indicator $\mathbf{r}(\mathbf{x}) \oplus \mathbf{s}^{(m)}_k(\mathbf{x})$ to specify which region $\mathbf{x}$ lies in, and use barycentric masks $\mathbf{m}(\mathbf{x})$ to check whether $\mathbf{x}$ lies on a cell boundary and to determine which boundary it lies on (Secs.~\ref{sec:region-indicators} and~\ref{sec:perturbation}).
This yields our grid-aware edge subdivision and preserves the same high-level subdivision rule.
\section{Neighbor Configuration Lookup Tables for the Tetrahedral Grid}
\label{supp:neighbor_lut}
We report the lookup tables of neighbor configurations used in \cref{sec:perturbation} for the tetrahedral grid in \cref{fig:tetsubdiv_mtd}, divided into face, edge, and vertex neighbors.
Each tetrahedron $\mathcal{T}$ has ordered vertices $(v_0, v_1, v_2, v_3)$.
We fix the local face and edge indices as follows:
face index $f \in \{0,1,2,3\}$ refers to the face opposite vertex $v_f$ (\eg, $f = 0$ corresponds to the face $\{v_1, v_2, v_3\}$), and edge index $e \in \{0,\dots,5\}$ refers to
\begin{equation}
e_0 = \{v_0, v_1\},\;
e_1 = \{v_0, v_2\},\;
e_2 = \{v_0, v_3\},\;
e_3 = \{v_1, v_2\},\;
e_4 = \{v_1, v_3\},\;
e_5 = \{v_2, v_3\}.
\end{equation}
For each configuration, the neighbor index $i$ runs over $\{0,1,\dots\}$, where $i=0$ denotes the self configuration $(\Delta_0{=}(0,0,0),\, t_0{=}t)$.
In the face and edge neighbor tables, we list only the non-trivial neighbors with indices $i \ge 1$ and do not explicitly include the $i=0$ row.
For the vertex neighbor table, the lookup is defined per cube corner; for a given cube corner $c$ and tetra offset $t$, one of these rows may coincide with the self configuration.

\begin{table}[htbp]
\centering
\renewcommand{\arraystretch}{1.1}
\setlength{\tabcolsep}{5pt}
\caption{
Face neighbor lookup table. For each tetra offset $t \in \{0,\dots,5\}$ and local face index  $f \in \{0,\dots,3\}$, we list the neighbor index $i \ge 1$, the anchor displacement $\Delta_i = (\Delta_{x,i}, \Delta_{y,i}, \Delta_{z,i}) \in \mathbb{Z}^3$, and the neighbor tetra offset $t_i$.
In the table, the columns $\Delta_x,\Delta_y,\Delta_z$ give the components of $\Delta_i$ for each row $i$.
These entries correspond to the configurations $(\Delta_i, t_i)$ used in~\cref{sec:perturbation}.
The self configuration $(i{=}0,\Delta_0{=}(0,0,0),t_0{=}t)$ is omitted.
}
\begin{tabular}{cc|c|cccc|cc|c|cccc}
\toprule
\cmidrule(r){1-7}\cmidrule(l){8-14}
Input & $f$ & $i$ & $\Delta_x$ & $\Delta_y$ & $\Delta_z$ & $t_i$ &
Input & $f$ & $i$ & $\Delta_x$ & $\Delta_y$ & $\Delta_z$ & $t_i$\\
\midrule
$t{=}0$ & 0 & 1 & 1 & 0 & 0 & 4 & $t{=}3$ & 0 & 1 & 0 & 1 & 0 & 5\\
        & 1 & 1 & 0 & 0 & 0 & 5 &          & 1 & 1 & 0 & 0 & 0 & 4\\
        & 2 & 1 & 0 & 0 & 0 & 1 &          & 2 & 1 & 0 & 0 & 0 & 2\\
        & 3 & 1 & 0 & $-1$ & 0 & 2 &       & 3 & 1 & $-1$ & 0 & 0 & 1\\
\midrule
$t{=}1$ & 0 & 1 & 1 & 0 & 0 & 3 & $t{=}4$ & 0 & 1 & 0 & 0 & 1 & 2\\
        & 1 & 1 & 0 & 0 & 0 & 2 &          & 1 & 1 & 0 & 0 & 0 & 3\\
        & 2 & 1 & 0 & 0 & 0 & 0 &          & 2 & 1 & 0 & 0 & 0 & 5\\
        & 3 & 1 & 0 & 0 & $-1$ & 5 &       & 3 & 1 & $-1$ & 0 & 0 & 0\\
\midrule
$t{=}2$ & 0 & 1 & 0 & 1 & 0 & 0 & $t{=}5$ & 0 & 1 & 0 & 0 & 1 & 1\\
        & 1 & 1 & 0 & 0 & 0 & 1 &          & 1 & 1 & 0 & 0 & 0 & 0\\
        & 2 & 1 & 0 & 0 & 0 & 3 &          & 2 & 1 & 0 & 0 & 0 & 4\\
        & 3 & 1 & 0 & 0 & $-1$ & 4 &       & 3 & 1 & 0 & $-1$ & 0 & 3\\
\bottomrule
\end{tabular}
\end{table}

\begin{table*}[t]
\centering
\small
\renewcommand{\arraystretch}{1.05}
\setlength{\tabcolsep}{4pt}
\caption{
Edge neighbor lookup tables for tetra offsets $t = 0\text{--}5$.
For each input edge $(t,e)$ and neighbor index $i \ge 1$, we list the anchor displacement $\Delta_i = (\Delta_{x,i}, \Delta_{y,i}, \Delta_{z,i})$ and the neighbor tetra offset $t_i$.
In the table, the columns $\Delta_x,\Delta_y,\Delta_z$ give the components of $\Delta_i$ for each row $i$.
These entries correspond to the configurations $(\Delta_i, t_i)$ used in~\cref{sec:perturbation}.
The self configuration $(i{=}0,\Delta_0{=}(0,0,0),t_0{=}t)$ is omitted.
}
\resizebox{\textwidth}{!}{%
\begin{tabular}{ccc}
\begin{minipage}[t]{0.36\textwidth}
\centering
\begin{tabular}{c|c|cccc}
\toprule
Input & $i$ & $\Delta_x$ & $\Delta_y$ & $\Delta_z$ & $t_i$\\
\midrule
$t{=}0,\,e{=}0$ & 1 & 0 & $-1$ & $-1$ & 3 \\
                & 2 & 0 & $-1$ & $-1$ & 4 \\
                & 3 & 0 & 0 & $-1$ & 5 \\
                & 4 & 0 & $-1$ & 0 & 2 \\
                & 5 & 0 & 0 & 0 & 1 \\
\midrule
$t{=}0,\,e{=}1$ & 1 & 0 & $-1$ & 0 & 2 \\
                & 2 & 0 & $-1$ & 0 & 3 \\
                & 3 & 0 & 0 & 0 & 5 \\
\midrule
$t{=}0,\,e{=}2$ & 1 & 0 & 0 & 0 & 1 \\
                & 2 & 0 & 0 & 0 & 2 \\
                & 3 & 0 & 0 & 0 & 3 \\
                & 4 & 0 & 0 & 0 & 4 \\
                & 5 & 0 & 0 & 0 & 5 \\
\midrule
$t{=}0,\,e{=}3$ & 1 & 0 & $-1$ & 0 & 1 \\
                & 2 & 0 & $-1$ & 0 & 2 \\
                & 3 & 1 & $-1$ & 0 & 3 \\
                & 4 & 1 & 0 & 0 & 4 \\
                & 5 & 1 & 0 & 0 & 5 \\
\midrule
$t{=}0,\,e{=}4$ & 1 & 0 & 0 & 0 & 1 \\
                & 2 & 1 & 0 & 0 & 3 \\
                & 3 & 1 & 0 & 0 & 4 \\
\midrule
$t{=}0,\,e{=}5$ & 1 & 0 & 0 & 0 & 5 \\
                & 2 & 1 & 0 & 0 & 4 \\
                & 3 & 0 & 0 & 1 & 1 \\
                & 4 & 1 & 0 & 1 & 2 \\
                & 5 & 1 & 0 & 1 & 3 \\
\bottomrule
\end{tabular}
\end{minipage}
&
\begin{minipage}[t]{0.36\textwidth}
\centering
\begin{tabular}{c|c|cccc}
\toprule
Input & $i$ & $\Delta_x$ & $\Delta_y$ & $\Delta_z$ & $t_i$\\
\midrule
$t{=}1,\,e{=}0$ & 1 & 0 & $-1$ & $-1$ & 3 \\
                & 2 & 0 & $-1$ & $-1$ & 4 \\
                & 3 & 0 & 0 & $-1$ & 5 \\
                & 4 & 0 & $-1$ & 0 & 2 \\
                & 5 & 0 & 0 & 0 & 0 \\
\midrule
$t{=}1,\,e{=}1$ & 1 & 0 & 0 & $-1$ & 4 \\
                & 2 & 0 & 0 & $-1$ & 5 \\
                & 3 & 0 & 0 & 0 & 2 \\
\midrule
$t{=}1,\,e{=}2$ & 1 & 0 & 0 & 0 & 0 \\
                & 2 & 0 & 0 & 0 & 2 \\
                & 3 & 0 & 0 & 0 & 3 \\
                & 4 & 0 & 0 & 0 & 4 \\
                & 5 & 0 & 0 & 0 & 5 \\
\midrule
$t{=}1,\,e{=}3$ & 1 & 0 & 0 & $-1$ & 0 \\
                & 2 & 0 & 0 & $-1$ & 5 \\
                & 3 & 1 & 0 & $-1$ & 4 \\
                & 4 & 1 & 0 & 0 & 2 \\
                & 5 & 1 & 0 & 0 & 3 \\
\midrule
$t{=}1,\,e{=}4$ & 1 & 0 & 0 & 0 & 0 \\
                & 2 & 1 & 0 & 0 & 3 \\
                & 3 & 1 & 0 & 0 & 4 \\
\midrule
$t{=}1,\,e{=}5$ & 1 & 0 & 0 & 0 & 2 \\
                & 2 & 1 & 0 & 0 & 3 \\
                & 3 & 0 & 1 & 0 & 0 \\
                & 4 & 1 & 1 & 0 & 4 \\
                & 5 & 1 & 1 & 0 & 5 \\
\bottomrule
\end{tabular}
\end{minipage}
&
\begin{minipage}[t]{0.36\textwidth}
\centering
\begin{tabular}{c|c|cccc}
\toprule
Input & $i$ & $\Delta_x$ & $\Delta_y$ & $\Delta_z$ & $t_i$\\
\midrule
$t{=}2,\,e{=}0$ & 1 & $-1$ & 0 & $-1$ & 0 \\
                & 2 & $-1$ & 0 & $-1$ & 5 \\
                & 3 & 0 & 0 & $-1$ & 4 \\
                & 4 & $-1$ & 0 & 0 & 1 \\
                & 5 & 0 & 0 & 0 & 3 \\
\midrule
$t{=}2,\,e{=}1$ & 1 & 0 & 0 & $-1$ & 4 \\
                & 2 & 0 & 0 & $-1$ & 5 \\
                & 3 & 0 & 0 & 0 & 1 \\
\midrule
$t{=}2,\,e{=}2$ & 1 & 0 & 0 & 0 & 0 \\
                & 2 & 0 & 0 & 0 & 1 \\
                & 3 & 0 & 0 & 0 & 3 \\
                & 4 & 0 & 0 & 0 & 4 \\
                & 5 & 0 & 0 & 0 & 5 \\
\midrule
$t{=}2,\,e{=}3$ & 1 & 0 & 0 & $-1$ & 3 \\
                & 2 & 0 & 0 & $-1$ & 4 \\
                & 3 & 0 & 1 & $-1$ & 5 \\
                & 4 & 0 & 1 & 0 & 0 \\
                & 5 & 0 & 1 & 0 & 1 \\
\midrule
$t{=}2,\,e{=}4$ & 1 & 0 & 0 & 0 & 3 \\
                & 2 & 0 & 1 & 0 & 0 \\
                & 3 & 0 & 1 & 0 & 5 \\
\midrule
$t{=}2,\,e{=}5$ & 1 & 0 & 0 & 0 & 1 \\
                & 2 & 1 & 0 & 0 & 3 \\
                & 3 & 0 & 1 & 0 & 0 \\
                & 4 & 1 & 1 & 0 & 4 \\
                & 5 & 1 & 1 & 0 & 5 \\
\bottomrule
\end{tabular}
\end{minipage}
\end{tabular}
}
\end{table*}

\begin{table*}[t]
\centering
\small
\renewcommand{\arraystretch}{1.05}
\setlength{\tabcolsep}{4pt}
\resizebox{\textwidth}{!}{%
\begin{tabular}{ccc}
\begin{minipage}[t]{0.36\textwidth}
\centering
\begin{tabular}{c|c|cccc}
\toprule
Input & $i$ & $\Delta_x$ & $\Delta_y$ & $\Delta_z$ & $t_i$\\
\midrule
$t{=}3,\,e{=}0$ & 1 & $-1$ & 0 & $-1$ & 0 \\
                & 2 & $-1$ & 0 & $-1$ & 5 \\
                & 3 & 0 & 0 & $-1$ & 4 \\
                & 4 & $-1$ & 0 & 0 & 1 \\
                & 5 & 0 & 0 & 0 & 2 \\
\midrule
$t{=}3,\,e{=}1$ & 1 & $-1$ & 0 & 0 & 0 \\
                & 2 & $-1$ & 0 & 0 & 1 \\
                & 3 & 0 & 0 & 0 & 4 \\
\midrule
$t{=}3,\,e{=}2$ & 1 & 0 & 0 & 0 & 0 \\
                & 2 & 0 & 0 & 0 & 1 \\
                & 3 & 0 & 0 & 0 & 2 \\
                & 4 & 0 & 0 & 0 & 4 \\
                & 5 & 0 & 0 & 0 & 5 \\
\midrule
$t{=}3,\,e{=}3$ & 1 & $-1$ & 0 & 0 & 1 \\
                & 2 & $-1$ & 0 & 0 & 2 \\
                & 3 & $-1$ & 1 & 0 & 0 \\
                & 4 & 0 & 1 & 0 & 4 \\
                & 5 & 0 & 1 & 0 & 5 \\
\midrule
$t{=}3,\,e{=}4$ & 1 & 0 & 0 & 0 & 2 \\
                & 2 & 0 & 1 & 0 & 0 \\
                & 3 & 0 & 1 & 0 & 5 \\
\midrule
$t{=}3,\,e{=}5$ & 1 & 0 & 0 & 0 & 4 \\
                & 2 & 0 & 1 & 0 & 5 \\
                & 3 & 0 & 0 & 1 & 2 \\
                & 4 & 0 & 1 & 1 & 0 \\
                & 5 & 0 & 1 & 1 & 1 \\
\bottomrule
\end{tabular}
\end{minipage}
&
\begin{minipage}[t]{0.36\textwidth}
\centering
\begin{tabular}{c|c|cccc}
\toprule
Input & $i$ & $\Delta_x$ & $\Delta_y$ & $\Delta_z$ & $t_i$\\
\midrule
$t{=}4,\,e{=}0$ & 1 & $-1$ & $-1$ & 0 & 1 \\
                & 2 & $-1$ & $-1$ & 0 & 2 \\
                & 3 & 0 & $-1$ & 0 & 3 \\
                & 4 & $-1$ & 0 & 0 & 0 \\
                & 5 & 0 & 0 & 0 & 5 \\
\midrule
$t{=}4,\,e{=}1$ & 1 & $-1$ & 0 & 0 & 0 \\
                & 2 & $-1$ & 0 & 0 & 1 \\
                & 3 & 0 & 0 & 0 & 3 \\
\midrule
$t{=}4,\,e{=}2$ & 1 & 0 & 0 & 0 & 0 \\
                & 2 & 0 & 0 & 0 & 1 \\
                & 3 & 0 & 0 & 0 & 2 \\
                & 4 & 0 & 0 & 0 & 3 \\
                & 5 & 0 & 0 & 0 & 5 \\
\midrule
$t{=}4,\,e{=}3$ & 1 & $-1$ & 0 & 0 & 0 \\
                & 2 & $-1$ & 0 & 0 & 5 \\
                & 3 & $-1$ & 0 & 1 & 1 \\
                & 4 & 0 & 0 & 1 & 2 \\
                & 5 & 0 & 0 & 1 & 3 \\
\midrule
$t{=}4,\,e{=}4$ & 1 & 0 & 0 & 0 & 5 \\
                & 2 & 0 & 0 & 1 & 1 \\
                & 3 & 0 & 0 & 1 & 2 \\
\midrule
$t{=}4,\,e{=}5$ & 1 & 0 & 0 & 0 & 3 \\
                & 2 & 0 & 1 & 0 & 5 \\
                & 3 & 0 & 0 & 1 & 2 \\
                & 4 & 0 & 1 & 1 & 0 \\
                & 5 & 0 & 1 & 1 & 1 \\
\bottomrule
\end{tabular}
\end{minipage}
&
\begin{minipage}[t]{0.36\textwidth}
\centering
\begin{tabular}{c|c|cccc}
\toprule
Input & $i$ & $\Delta_x$ & $\Delta_y$ & $\Delta_z$ & $t_i$\\
\midrule
$t{=}5,\,e{=}0$ & 1 & $-1$ & $-1$ & 0 & 1 \\
                & 2 & $-1$ & $-1$ & 0 & 2 \\
                & 3 & 0 & $-1$ & 0 & 3 \\
                & 4 & $-1$ & 0 & 0 & 0 \\
                & 5 & 0 & 0 & 0 & 4 \\
\midrule
$t{=}5,\,e{=}1$ & 1 & 0 & $-1$ & 0 & 2 \\
                & 2 & 0 & $-1$ & 0 & 3 \\
                & 3 & 0 & 0 & 0 & 0 \\
\midrule
$t{=}5,\,e{=}2$ & 1 & 0 & 0 & 0 & 0 \\
                & 2 & 0 & 0 & 0 & 1 \\
                & 3 & 0 & 0 & 0 & 2 \\
                & 4 & 0 & 0 & 0 & 3 \\
                & 5 & 0 & 0 & 0 & 4 \\
\midrule
$t{=}5,\,e{=}3$ & 1 & 0 & $-1$ & 0 & 3 \\
                & 2 & 0 & $-1$ & 0 & 4 \\
                & 3 & 0 & $-1$ & 1 & 2 \\
                & 4 & 0 & 0 & 1 & 0 \\
                & 5 & 0 & 0 & 1 & 1 \\
\midrule
$t{=}5,\,e{=}4$ & 1 & 0 & 0 & 0 & 4 \\
                & 2 & 0 & 0 & 1 & 1 \\
                & 3 & 0 & 0 & 1 & 2 \\
\midrule
$t{=}5,\,e{=}5$ & 1 & 0 & 0 & 0 & 0 \\
                & 2 & 1 & 0 & 0 & 4 \\
                & 3 & 0 & 0 & 1 & 1 \\
                & 4 & 1 & 0 & 1 & 2 \\
                & 5 & 1 & 0 & 1 & 3 \\
\bottomrule
\end{tabular}
\end{minipage}
\end{tabular}
}
\end{table*}

\begin{table}[h]
\centering
\renewcommand{\arraystretch}{1.1}
\setlength{\tabcolsep}{5pt}
\caption{
Vertex neighbor lookup table around cube corner $(0,0,0)$.
For each neighbor index $i$, we list the anchor displacement
$\Delta_i = (\Delta_{x,i}, \Delta_{y,i}, \Delta_{z,i})$ and the neighbor tetra offset $t_i$.
These entries correspond to the configurations $(\Delta_i, t_i)$ used in~\cref{sec:perturbation}.
For a general cube corner $c = (c_x,c_y,c_z)\in\{0,1\}^3$, we obtain the corresponding offsets by replacing $\Delta_i \leftarrow \Delta_i + c$.
Row index ranges (\eg, $1\text{--}6$) indicate consecutive neighbor indices that share the same $(\Delta_i, t_i)$ configuration.
Note that, unlike the face and edge tables, this vertex table is defined per cube corner for convenience, since the vertices of each tetrahedron can be mapped to a cube corner.
For a specific cube corner $c$ and tetra offset $t$, one of these rows may coincide with the self configuration $(\Delta_0{=}(0,0,0), t_0{=}t)$.
}
\begin{tabular}{c|ccc|c}
\toprule
$i$ & $\Delta_x$ & $\Delta_y$ & $\Delta_z$ & $t_i$ \\
\midrule
1--6   & $-1$ & $-1$ & $-1$ & $0\text{--}5$ \\
7--8   & $-1$ & $-1$ &  0   & $1,2$ \\
9--10  & $-1$ &  0   & $-1$ & $0,5$ \\
11--12 & $-1$ &  0   &  0   & $0,1$ \\
13--14 &  0   & $-1$ & $-1$ & $3,4$ \\
15--16 &  0   & $-1$ &  0   & $2,3$ \\
17--18 &  0   &  0   & $-1$ & $4,5$ \\
19--24 &  0   &  0   &  0   & $0\text{--}5$ \\
\bottomrule
\end{tabular}
\end{table}

}
\end{document}